\documentclass[10pt]{article}
\usepackage[a4paper, top=2.5cm, bottom=2.5cm, left=2.5cm, right=2.5cm]{geometry}
\usepackage{amssymb, amsmath}
\usepackage{graphicx}
\usepackage{tabularx}
\usepackage{booktabs}
\usepackage{array}
\usepackage{float}
\usepackage{multirow}
\usepackage{url}
\usepackage{caption}
\usepackage{subcaption}
\usepackage{pdflscape}
\usepackage{comment}
\usepackage{makecell}
\usepackage{mathrsfs}
\usepackage{algorithm}
\usepackage{algpseudocode}
\usepackage[numbers]{natbib}
\usepackage{soul}
\usepackage{xcolor}
\usepackage{hyperref}
\hypersetup{hidelinks}
\usepackage{amsthm}
\usepackage{fancyhdr}
\newtheorem{theorem}{Theorem}[section]

\newtheorem{proposition}[theorem]{Proposition}

\theoremstyle{definition}

\def\argmin{\mathop{\rm argmin}\limits}

\newcommand{\tb}{\textbf}

\title{LP-NAS: Linear Programming-based Neural Architecture Search}

\author{Abhishek Shukla$^{1}$ \and Ankur Sinha$^{2}$ \and Faiz Hamid$^{1}$}

\date{
$^{1}$Department of Management Sciences, IIT Kanpur, India\\
\texttt{abhiskl@iitk.ac.in, fhamid@iitk.ac.in}\\[6pt]
$^{2}$Krishnamurthy Tandon School of AI, IIM Ahmedabad, India\\
\texttt{asinha@iima.ac.in}
}

\begin{document}

\maketitle
\begin{abstract}
Neural Architecture Search (NAS) aims to automate neural network architecture design, reducing reliance on human expertise. Among the various NAS methods, differentiable NAS has gained prominence due to its efficiency and accuracy compared to conventional NAS approaches. Since differentiable NAS relaxes the architecture search space into a continuous domain, it is possible to apply principles from continuous optimization to NAS. In this paper, we propose Linear Programming-based NAS (LP-NAS), a mathematical programming-based framework for differentiable NAS that is applicable to a wide range of continuous search spaces. LP-NAS formulates a linear program (LP) using the validation-loss gradient and the training-loss Hessian to compute an architecture update direction that improves generalization while preserving the optimality of the model parameters. By following this LP-derived descent direction, LP-NAS efficiently navigates the architecture search space, leading to faster and more effective architecture optimization. We introduce two computationally efficient variants of LP-NAS, namely S-LP-NAS and R-LP-NAS. Applying LP-NAS to the Differentiable Architecture Search (DARTS) search space results in two algorithmic variants, S-LP-DARTS and R-LP-DARTS. Both variants achieve faster convergence and significantly higher validation performance during the early search iterations than the standard DARTS algorithm. Extensive experiments on CIFAR-10 and CIFAR-100 show that LP-DARTS outperforms standard DARTS in both the architecture search and evaluation phases. Additionally, we compare our approach with several DARTS variants (P-DARTS, PC-DARTS, and STO-DARTS) on the CIFAR-10 dataset and demonstrate its effectiveness. Furthermore, we validate the transferability of the discovered architectures through experiments on the ImageNet dataset.
\end{abstract}

\noindent\textbf{Keywords:} Deep learning, Neural architecture search, Bilevel optimization, Linear programming.

\section{Introduction}
Artificial Neural Networks (ANNs) are machine learning models designed to process data in a manner inspired by the human brain. ANNs are characterized by architecture parameters (hyperparameters) which are not learned from the training data but are instead set prior to training. Hyperparameter Optimization (HPO) is a technique aimed at determining the optimal values of the hyperparameters for machine learning models. Traditionally, expert knowledge was essential in designing neural network architectures. However, with advances in artificial intelligence, engineers and scientists are increasingly focused  on automating the neural network design process. This emerging area of research, referred to as Neural Architecture Search (NAS), a sub-field of HPO and Automated Machine Learning (AutoML), aims to facilitate the automated development of neural network architectures for specific requirements.

Mathematically, NAS and other HPO problems, as formalized in \cite{elsken2019neural,ren2021comprehensive}, are bilevel optimization problems \cite{sinha2017review, pujara2025review}, in which an outer (leader) problem selects the architecture and an inner (follower) problem trains the corresponding weights, as shown in Formulation~\eqref{formulation_1}. These problems are typically solved under the optimistic bilevel assumption. In this formulation, \(\mathcal{L}_v\) and \(\mathcal{L}_t\) represent the validation loss and the training loss functions. The lower-level variable ($W$) corresponds to the neural network's weights and biases, while the upper-level variable ($A$) denotes the architecture parameters. When considering NAS as a bilevel optimization problem, it is noteworthy that the dimensionality of $W$ significantly exceeds that of $A$.
\begin{equation}\label{formulation_1}
	\begin{aligned}
		\min_A \quad & \mathcal{L}_v(A, \hat{W}) \\
		\text{st.} \quad & \hat{W} \in \arg\min_{W \in \mathscr{W}} \mathcal{L}_t(A, W) \\
		& A \in \mathscr{A}
	\end{aligned}
\end{equation}

NAS gained significant attention following the publication of the research paper \cite{zoph2017neural}. Since then, this domain has experienced substantial growth, evidenced by the proliferation of related publications in recent years \cite{white2023neural}. Architectures designed through NAS have demonstrated exceptional performance across diverse application areas, such as medical imaging \cite{xie2024lightweight, weng2019unet, wang2024mednas}, language modeling \cite{sarah2024llama}, translation \cite{chitty2022neural}, smart city applications \cite{li2020autost}, image classification and object detection \cite{zoph2017automl}, and semantic segmentation \cite{liu2019auto}. In many instances, NAS-designed architectures have outperformed those crafted by human experts. Advancements in automated neural network design have democratized the architecture design process, reducing the need for specialized knowledge. Current progress in NAS demonstrates its potential to revolutionize various application domains and research fields \cite{meng2024evolution}.

Due to the combinatorial nature and NP-hard complexity of NAS, solving this problem is computationally challenging and time-intensive. Specifically, Reinforcement Learning (RL) \cite{zoph2017neural} and Evolutionary Computation (EC) \cite{liu2021survey}-based algorithms require approximately three orders of magnitude more GPU days than gradient-based methods \cite{DBLP:conf/iclr/LiuSY19} due to the exploration of discrete architectural spaces and treating the objective function as a black box rather than a functional form. In contrast, gradient-based NAS approaches lack such intricacies. 

Since NAS emerged, various benchmarks have been developed to assess and compare new NAS methods. Differentiable Architecture Search (DARTS) has emerged as one of the most widely adopted benchmarks in the NAS field. The DARTS algorithm pioneered the use of gradient-based optimization techniques within NAS. Following its introduction, several variants of DARTS have been developed to improve particular aspects of the original method. Some of them are given in Appendix~\ref{app1}. The continuous advancements and broad usage of DARTS highlight its importance and substantial influence within the NAS field. Although these methods have significantly improved DARTS in terms of search efficiency, robustness, and optimization stability, they primarily focus on modifying the search strategy, search space, or regularization techniques. Comparatively less attention has been devoted to exploiting the mathematical structure of the underlying bilevel optimization problem, despite DARTS being naturally formulated as one. One of the early works to model HPO as a bilevel optimization problem is~\cite{bennett2008bilevel}, which has inspired various studies~\cite{shukla2026bilevel,sinha2025linear,sinha2024gradient,okuno2021lp,franceschi2018bilevel}. Beyond NAS, more recently, the mathematical structure of bilevel optimization-based regularization problems has also been exploited for overfitting control in large language models, including transformer architectures, as demonstrated in \cite{shukla2026lift}. The current work builds on this line of research by exploiting the mathematical properties of continuous hyperparameter optimization through its bilevel formulation.
\begin{figure}[htbp]
\begin{center}
    \includegraphics[width=13.5cm, height=8.5cm]{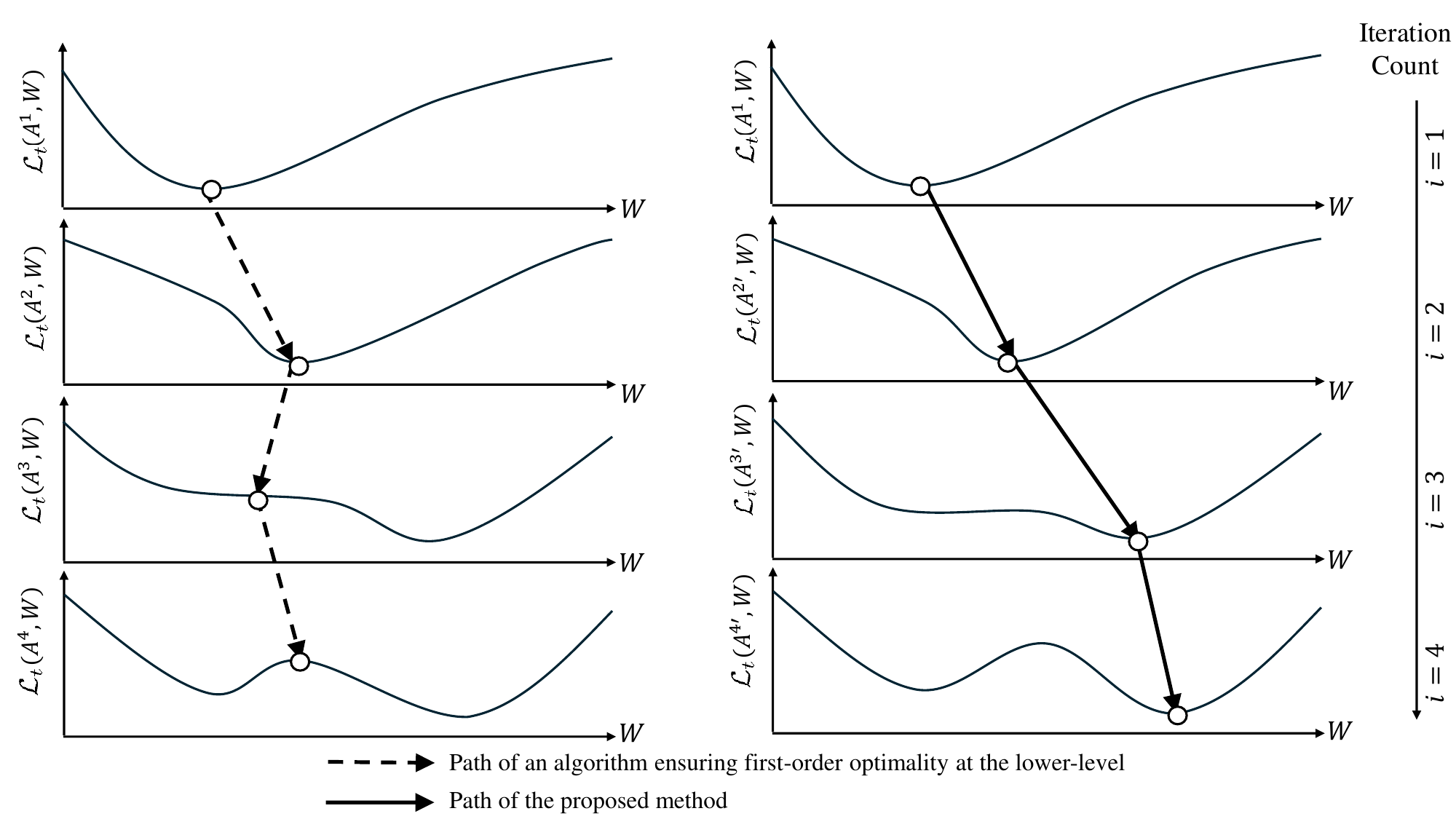}
    \caption{Optimization paths of a first-order method and our proposed approach.}
    \label{arch_search}
\end{center}
\end{figure}

We propose a novel NAS approach based on linear programming within the differentiable framework. Our method leverages the Hessian information from the lower-level optimization, together with the validation gradient from the upper-level, to formulate a Linear Program (LP) that computes a descent direction for the upper-level while ensuring that the architecture search is performed within the optimal valley of the lower-level problem. This approach, referred to as LP-NAS (an amalgamation of the LP-based bilevel descent method and differentiable NAS), distinguishes itself by maintaining model optimality throughout the architecture update process, unlike many existing NAS and HPO algorithms, which assume model optimality solely from the condition that the gradient of the training loss with respect to the model parameters is zero. While a zero gradient is a necessary condition, they are not sufficient for optimality. In contrast, our method ensures lower-level optimality while updating the architecture parameters and model parameters along the descent direction, providing a robust approach to optimization and effectively minimizing validation loss. Figure~\ref{arch_search} illustrates the issue of relying on the first-order lower-level optimality conditions alone and then optimizing the architecture parameters. To begin with, let the optimal model parameters corresponding to the initial architecture \( A^1 \) be known. Changing the architecture parameters (e.g., from \(A^2\) to \(A^3\)) across iterations, while ensuring first-order optimality at the lower-level, may result in updated weights corresponding to \(A^3\) that do not sufficiently satisfy the lower-level minimization condition. Consequently, although the upper-level objective (validation loss) may improve, the minimization of the lower-level training loss is not guaranteed. This imbalance can lead to overfitting on the validation loss while disregarding the sufficient minimization conditions for the training loss. A common approach to mitigate this issue is to re-solve the lower-level problem with a fixed \(A^3\), as done by many optimization methods. However, this approach can still pose challenges because the optimization begins from a state where the gradient is nearly zero. In such scenarios, the solution may be at a saddle point or even a local maximum of the lower-level problem while still achieving a relatively low validation loss. The final architecture (say $A^4$) may also not be optimal for the bilevel program. In contrast, the proposed method ensures that lower-level optimality is maintained while updating both architecture and model parameters along the descent direction, as illustrated in Figure~\ref{arch_search}. This allows for efficient convergence to model and architecture parameters that yield superior performance on validation/test datasets. 

In a nutshell, the key contributions of our work are:
\begin{enumerate}
    \item \tb{Linear pogramming-based hyperlocal search:} This approach leverages second-order information to compute architecture updates that improve validation performance while preserving model optimality.
    \item \tb{Efficient variants for faster convergence:} We propose two computationally efficient variants of LP-NAS, namely S-LP-NAS and R-LP-NAS, which differ in their model parameter selection strategies. Both variants accelerate convergence and improve validation performance during the early stages of the architecture search process.
    \item \tb{Superior performance on benchmark datasets:} Extensive experiments on the CIFAR-10 and CIFAR-100 datasets demonstrate that LP-DARTS, an instance of LP-NAS for the DARTS search space (DSS), consistently outperforms standard DARTS during both the architecture search and evaluation phases, resulting in superior architectures. 
    \item \tb{Comparison with DARTS variants and transferability:} We perform a comprehensive comparison with several DARTS variants (P-DARTS, PC-DARTS, and STO-DARTS) on the CIFAR-10 dataset, demonstrating the effectiveness of LP-DARTS, and validate the transferability of the discovered architectures through experiments on the ImageNet dataset.
\end{enumerate}

The paper is organized as follows. Section~\ref{sec:preliminaries} covers the foundational concepts and background necessary for the development of the LP-NAS algorithm. Section~\ref{sec:methodology} introduces the proposed LP-NAS approach in detail. In Section~\ref{sec:exp_results}, we present the experimental results for architecture optimization on the CIFAR-10 and CIFAR-100 datasets and compare the performance of LP-DARTS with that of the original DARTS algorithm. We also compare the architectures obtained in this work with those reported in the literature on the CIFAR-10 dataset. Furthermore, we evaluate the transferability of the architectures obtained on CIFAR-10 to the ImageNet dataset and discuss the insights gained from the experimental results. Finally, Section~\ref{sec:conclusions} offers concluding remarks, outlines the challenges and limitations encountered during the architecture search process, and suggests potential directions for future research.

\section{Preliminaries}
\label{sec:preliminaries}
In this section, we discuss the hypergradient-based approach for bilevel optimization and lay the foundation for developing the LP-NAS algorithm.

\subsection{Hypergradient}
The gradient of the upper-level objective (validation loss) with respect to the upper-level decision variable \(A\) is called the hypergradient. One of the most well-known optimization methods for NAS is based on an approximate hypergradient \cite{giovannelli2021inexact}. This hypergradient is used to update the architecture parameters to improve validation performance while accounting for the model weights obtained after one step of gradient descent on the training loss. The hypergradient is approximated as:
\[
\nabla_A \mathcal{L}_v(A, \hat{W}) \approx \nabla_A \mathcal{L}_v\left(A, W - \xi \nabla_W \mathcal{L}_t(A, W)\right),
\]
where $\xi$ is a small learning rate used for one-step unrolled optimization of the model training problem. Setting $\xi = 0$ yields a first-order approximation, while $\xi \neq 0$ introduces a second-order correction:
\begin{equation}\label{approx_hypergrad}
  \nabla_A \mathcal{L}_v(A,\hat{W}) \approx \nabla_A \mathcal{L}_v(A, W') - \xi \nabla^2_{A, W} \mathcal{L}_t(A, W) \nabla_{W'} \mathcal{L}_v(A, W'),  
\end{equation}
with $W' = W - \xi \nabla_W \mathcal{L}_t(A, W)$. The second term captures the interaction between the architecture parameters and the network weights through the mixed Hessian of the training loss, thereby accounting for their influence on the approximate hypergradient. This term can be efficiently approximated using finite-difference schemes. For further details on DARTS, see \cite{DBLP:conf/iclr/LiuSY19}.

\subsection{Hyperlocal Search using LP}
To address the bilevel optimization problem described in Formulation \eqref{formulation_1}, it is required to compute the descent direction/gradient of the upper-level objective. This is achieved by formulating and solving an LP, giving a descent direction for the upper-level objective that ensures the maintenance of lower-level optimality conditions when upper-level and lower-level variables are updated along this direction (Proposition~\ref{prop:hls_optimality}). This descent direction facilitates efficient exploration of the continuous hyperparameters and simultaneously updates model parameters. We refer to this search using LP as \textit{Hyperlocal Search (HLS)}. The LP for HLS is given as follows, which is a relaxed formulation of the original Second-Order Cone Program (SOCP) (see Appendix~\ref{app2} for the derivation), whose optimal solution gives the steepest descent direction.
\begin{equation}\label{devised_LP}
	\begin{aligned}
		\min_{d_A, d_W} \quad & \left\langle \begin{bmatrix}
    \nabla_A \mathcal{L}_v(A^0, W^0) \\
    \nabla_W \mathcal{L}_v(A^0, W^0)
\end{bmatrix}, \begin{bmatrix}
    d_A \\
    d_W
\end{bmatrix} \right\rangle \\
		\text{subject to:} \quad & \begin{bmatrix}
        H_{21} & H_{22}
    \end{bmatrix} 
    \begin{bmatrix}
        d_A \\
        d_W
    \end{bmatrix} = 0\\
		\quad & -1 \leq d_A \leq 1 \\
	\end{aligned}
\end{equation}

It is noteworthy that the set of linear constraints in the LP ensures that $[d_A, d_W]^T$ is orthogonal to $[H_{ij}]_{i=p+1, j=1}^{i=p+q, j=p+q}$.\footnote{ With some abuse of terminology, we refer to the rectangular matrix, $[H_{ij}]_{i=p+1, j=1}^{i=p+q, j=p+q}$, as the Hessian matrix. $\nabla_{(A, W)}^2 \mathcal{L}_t(A^0, W^0)=\begin{bmatrix}
    H_{ij}
\end{bmatrix}_{i=1, j=1}^{i=p+q, j=p+q} = \begin{bmatrix}
    [H_{ij}]_{i=1, j=1}^{i=p, j=p} & [H_{ij}]_{i=1, j=p+1}^{i=p, j=p+q} \\
    [H_{ij}]_{i=p+1, j=1}^{i=p+q, j=p} & [H_{ij}]_{i=p+1, j=p+1}^{i=p+q, j=p+q}
\end{bmatrix}=\begin{bmatrix}
    H_{11} & H_{12} \\
    H_{21} & H_{22}
\end{bmatrix}$} This means that any infinitesimal change in $(A^0, W^0)$ along this direction will continue to guarantee lower-level optimality.
Solving the LP \eqref{devised_LP} we obtain a descent direction ($d_A^*, d_W^*$), which provides models with different architectures, $\mathcal{M}(A^0+td_A^*, W^0+td_W^*)$, when $t$ is varied. With $t>0$, one moves along the direction that improves validation loss (optimal objective value for the linear programming problem is always $\leq 0$ as proved in the Proposition~\ref{prop:hls_improvement}).

\begin{proposition}\label{prop:hls_optimality}
(Preservation of Lower-Level Optimality after the HLS update).  
Let the model parameters \((W^0)\) be initially optimal at the lower-level for given architecture parameters \((A^0)\). If the HLS is performed using the proposed linear program, the model parameters continue to remain optimal after the HLS update. (proof in Appendix~\ref{app3}.)
\end{proposition}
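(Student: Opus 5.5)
The natural route is a first-order Taylor argument on the lower-level stationarity map $g(A,W) := \nabla_W \mathcal{L}_t(A,W)$, together with a continuity argument for the second-order condition. Since $W^0$ is optimal for $A^0$, we have $g(A^0,W^0)=0$. We also assume, as is implicit in the statement, that $\mathcal{L}_t$ is twice continuously differentiable near $(A^0,W^0)$. For the second-order part we assume $W^0$ is a strict local minimizer, i.e.\ $H_{22} \succ 0$. Let $(d_A^*,d_W^*)$ be an optimal solution of LP~\eqref{devised_LP}; it exists because the optimal value is bounded (Proposition~\ref{prop:hls_improvement}) and the feasible set is nonempty, since it contains $(0,0)$. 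We study the updated point $(A(t),W(t)) = (A^0+td_A^*,\,W^0+td_W^*)$ for small $t>0$.

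\textbf{Step 1 (first-order condition).} Note that the Jacobian of $g$ with respect to $(A,W)$ at $(A^0,W^0)$ is exactly the block row $\begin{bmatrix} H_{21} & H_{22}\end{bmatrix}$ of $\nabla^2_{(A,W)}\mathcal{L}_t(A^0,W^0)$. Taylor's theorem then gives
\[
g(A(t),W(t)) = g(A^0,W^0) + t\begin{bmatrix} H_{21} & H_{22}\end{bmatrix}\begin{bmatrix} d_A^* \\ d_W^*\end{bmatrix} + o(t).
\]
The first term vanishes by optimality of $W^0$, and the second vanishes by the equality constraint of the LP. Hence $\|g(A(t),W(t))\| = o(t)$: the stationarity condition is preserved to first order along the HLS direction. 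In other words, $(d_A^*,d_W^*)$ is tangent to the stationary manifold $\{(A,W): g(A,W)=0\}$.

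\textbf{Step 2 (second-order condition).} $H_{22}$ depends continuously on $(A,W)$ and is positive definite at $(A^0,W^0)$. Therefore $\nabla^2_{W}\mathcal{L}_t(A(t),W(t)) \succ 0$ for all sufficiently small $t$. Combined with Step 1, the updated weights satisfy the lower-level optimality conditions in the infinitesimal sense intended by the statement.

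\textbf{Step 3 (main obstacle: exact rather than $o(t)$ stationarity).} Step 1 only gives stationarity up to $o(t)$, whereas the statement claims the weights ``continue to remain optimal''. To make this rigorous I would invoke the implicit function theorem. Because $H_{22}$ is invertible, there is a $C^1$ map $W^\star(A)$ near $A^0$ with $g(A,W^\star(A))=0$ and $\nabla_A W^\star(A^0) = -H_{22}^{-1}H_{21}$. The LP constraint forces $d_W^* = -H_{22}^{-1}H_{21} d_A^*$, which is precisely the derivative of $W^\star$ in direction $d_A^*$. So the HLS update moves along the tangent of the curve of exact lower-level minimizers, and $W(t) = W^\star(A(t)) + o(t)$. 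By Step 2, each $W^\star(A(t))$ is a strict local minimizer of $\mathcal{L}_t(A(t),\cdot)$. This establishes optimality in the infinitesimal sense used by the paper, with $W(t)$ agreeing with the exact minimizer to first order.

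If $H_{22}$ is singular, I would instead state the result only as first-order preservation (Step 1). That is the weakest reading consistent with the footnoted orthogonality remark in the paper.
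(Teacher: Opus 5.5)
Your argument is correct, in the infinitesimal sense that is the only one in which the statement can hold in general, but it takes a different route from the paper. The paper works with the second-order Taylor model of $\mathcal{L}_t$ itself:
\begin{itemize}
\item it uses the equality constraint to collapse the quadratic term to $\langle d_A,(H_{11}-H_{12}H_{22}^{-1}H_{21})d_A\rangle$;
\item it rewrites $d_A=-H_{21}^\dagger H_{22}d_W$, giving a ``new Hessian'' $H''=(H_{21}^\dagger H_{22})^T S\,(H_{21}^\dagger H_{22})$ in the variable $d_W$;
\item it concludes $H''\succeq 0$ because the Schur complement $S$ of a PSD joint Hessian is PSD.
\end{itemize}
You instead linearize the stationarity map $\nabla_W\mathcal{L}_t$, whose Jacobian is exactly the constraint row $[\,H_{21}\;\;H_{22}\,]$. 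You then use the implicit function theorem to identify the LP constraint with the tangent of the minimizer curve $W^\star(A)$.

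Your route has several advantages. It needs only $H_{22}\succ 0$, which is implied by, but weaker than, the paper's joint-PSD hypothesis combined with invertibility of $H_{22}$. It avoids the pseudo-inverse step, which tacitly requires $H_{21}$ to have full column rank. It also says precisely what ``remains optimal'' means: $W(t)=W^\star(A(t))+o(t)$, with $W^\star(A(t))$ a strict local minimizer. Finally, it makes clear that after eliminating $d_W=-H_{22}^{-1}H_{21}d_A$, the LP objective is $\langle \nabla_A\mathcal{L}_v-H_{12}H_{22}^{-1}\nabla_W\mathcal{L}_v,\,d_A\rangle$. That is the implicit hypergradient, consistent with the paper's closing remark. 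The paper's route instead gives a curvature statement: $\mathcal{L}_t$ has nonnegative curvature along the feasible joint directions. But $H''$ is that restricted curvature, not $\nabla_W^2\mathcal{L}_t$ at the updated point, so your argument is the more direct link to lower-level optimality.

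Two small fixes are needed.
\begin{itemize}
\item Proposition~2 only shows that the LP's optimal value is at most $0$. An upper bound does not give existence of a minimizer. Existence instead follows from $H_{22}\succ 0$: the constraint forces $d_W=-H_{22}^{-1}H_{21}d_A$, so the feasible set is a linear image of the box and hence compact. If $H_{22}$ is singular, the LP can be unbounded below: take $d_A=0$ and $d_W\in\ker H_{22}$ with $\langle\nabla_W\mathcal{L}_v,d_W\rangle<0$, then scale. So your fallback should be stated for arbitrary feasible directions, for which Step~1 holds verbatim.
\item In Step~3, the positive definiteness you need is at $(A(t),W^\star(A(t)))$, not at $(A(t),W(t))$. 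The same continuity argument gives it, because $W^\star$ is continuous with $W^\star(A^0)=W^0$.
\end{itemize}
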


\begin{proposition}\label{prop:hls_improvement}
(Validation Performance Improvement via HLS). The HLS results to an improvement in validation performance in most cases, and in the worst case, leaves it unchanged. (proof in Appendix~\ref{app4}.)
\end{proposition}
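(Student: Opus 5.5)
The plan is to argue from the structure of LP~\eqref{devised_LP} that its optimal value is never positive. Then a first-order Taylor expansion of \(\mathcal{L}_v\) along the optimal direction turns this into a statement about the validation loss.

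\textbf{Step 1 (feasibility of zero).} The point \((d_A,d_W)=(0,0)\) satisfies \(H_{21}\cdot 0 + H_{22}\cdot 0 = 0\) and \(-1\le 0\le 1\). So the feasible set of LP~\eqref{devised_LP} is nonempty and contains the origin, where the objective equals \(0\). Hence the optimal value \(v^*\) satisfies \(v^*\le 0\). If the LP is unbounded below, we read this as a strictly descending feasible ray and treat it the same way as \(v^*<0\), after normalizing the step.

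\textbf{Step 2 (directional derivative).} Let \((d_A^*,d_W^*)\) be an optimal (or any improving feasible) solution. Define
\[
\phi(t)=\mathcal{L}_v(A^0+td_A^*,\,W^0+td_W^*).
\]
Assuming \(\mathcal{L}_v\) is continuously differentiable, we have
\[
\phi'(0)=\langle \nabla_A\mathcal{L}_v(A^0,W^0),d_A^*\rangle+\langle \nabla_W\mathcal{L}_v(A^0,W^0),d_W^*\rangle = v^*\le 0 .
\]
By Taylor's theorem, \(\phi(t)=\phi(0)+t\,v^*+o(t)\).

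\textbf{Step 3 (case split).}
\begin{itemize}
\item \emph{Case \(v^*<0\).} This is the generic case: the validation gradient is not orthogonal to the feasible subspace \(\{d: H_{21}d_A+H_{22}d_W=0\}\). Then \(\phi(t)<\phi(0)\) for all sufficiently small \(t>0\), so validation performance strictly improves.
\item \emph{Case \(v^*=0\).} Here the projected validation gradient vanishes on the feasible set and the origin is optimal. The update is \(d^*=0\) (or any zero-cost direction), and to first order the validation loss is unchanged. This is the "worst case" in the statement.
\end{itemize}
Proposition~\ref{prop:hls_optimality} guarantees that in both cases the lower-level optimality of the updated weights is maintained. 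The comparison is therefore between genuine bilevel-feasible points.

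\textbf{Main obstacle.} The main difficulty is making "in most cases" and "leaves it unchanged" precise. In Case 2 with a nonzero zero-cost direction, second-order terms could raise \(\mathcal{L}_v\). To avoid this, I would specify that the step size \(t\) is chosen by a line search (accept \(t\) only if \(\phi(t)\le\phi(0)\), otherwise \(t=0\)). With that rule, non-increase holds unconditionally and strict decrease holds whenever \(v^*<0\). I would also give a short justification that \(v^*<0\) is the generic situation: the condition \(v^*=0\) requires the gradient \(\nabla\mathcal{L}_v\) to lie in the row space of \([H_{21}\ H_{22}]\), modulo the box constraints on \(d_A\).
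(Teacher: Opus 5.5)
Your proof is correct, and its core is exactly the paper's argument: the origin is feasible for LP~\eqref{devised_LP} with objective value $0$, so the optimal value is $\le 0$. The paper then simply asserts the conclusion, implicitly through the first-order expansion of Appendix~\ref{app2}, which it writes as an equality. Your additions make precise what the paper leaves informal: the $o(t)$ remainder gives strict decrease only for sufficiently small $t$; you handle the unbounded case, which is possible when $H_{22}$ is singular; when $v^*=0$ every feasible point is optimal, so a returned nonzero direction can raise $\mathcal{L}_v$ at second order unless a safeguard such as backtracking is used; and the genericity criterion is $v^*<0 \iff \nabla\mathcal{L}_v \notin \operatorname{row}\,[H_{21}\ H_{22}]$, in which the box constraints play no role. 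Your closing appeal to Proposition~\ref{prop:hls_optimality} is unnecessary here and also overstated, since the linear constraint preserves $\nabla_W\mathcal{L}_t=0$ only to first order in $t$.
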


\section{Methodology}\label{sec:methodology}
To utilize the LP devised in \eqref{devised_LP}, the hyperparameters considered must be continuous. As previously explained, differentiable search spaces, such as that of DARTS, employ a continuous relaxation of the categorical operation choices on each edge by applying the SoftMax function over all candidate operations, thereby transforming the discrete search space into a continuous one. Therefore, the devised LP can be employed to conduct HLS, enabling efficient exploration of the architecture and model parameters. 

Once the descent direction ($d_A^*, d_W^*$) is obtained by solving the LP, we simultaneously update the hyperparameters and model parameters by a step size $(t)$ in the descent direction. We provide the details of a basic version of the proposed method, LP-NAS, in Algorithm~\ref{lpnasalgo}.

\begin{algorithm}
\caption{LP-NAS}
\label{lpnasalgo}
\begin{algorithmic}[1]
\setlength{\itemsep}{3pt}
\State Set iteration count $k=0$, initialize architecture parameters $A^0$ randomly and create the neural network using this architecture.
\State Lower-level optimization: Solve for $W^0$ for the given $A^0$, i.e. $W^0 = \argmin_{W \in \mathscr{W}} \mathcal{L}_t(A^0, W)$
\While{$k < k^\text{max}$}
    \State \parbox[t]{\dimexpr\linewidth-\algorithmicindent}{Hessian approximation: Use an efficient approximation approach to get the approximation of the Hessian matrix $[H_{ij}^k]_{i=p+1, j=1}^{i=p+q, j=p+q}$ at $(A^k, W^k)$.}
    \State \parbox[t]{\dimexpr\linewidth-\algorithmicindent}{Validation gradient: Calculate the validation gradient: $[\nabla_A \mathcal{L}_v(A^k, W^k), \nabla_W \mathcal{L}_v(A^k, W^k)]^T$.}
    \State \parbox[t]{\dimexpr\linewidth-\algorithmicindent}{LP formulation and solution: Formulate and solve the LP to get the descent direction ($d_{A^k}^{*}, d_{W^k}^{*}$).}
    \State \parbox[t]{\dimexpr\linewidth-\algorithmicindent}{Hyperlocal search: Update the architecture and the model parameters along the descent direction,}
    \begin{align*}
    (A^{k+1}, V^{k+1}) &\leftarrow (A^k, W^{k}) + t(d_{A^k}^{*}, d_{W^k}^{*})
    \end{align*}
    \State \parbox[t]{\dimexpr\linewidth-\algorithmicindent}{Lower-level optimization: Solve for $W^{k+1}$ for the given $A^{k+1}$ with a warm start from $V^{k+1}$.}
    \begin{align*}
    W^{k+1} = \argmin_{W \in \mathscr{W}} \mathcal{L}_t(A^{k+1}, W)
    \end{align*}
    \State \parbox[t]{\dimexpr\linewidth-\algorithmicindent}{Increment iteration count: $k \leftarrow k+1$}
\EndWhile
\State Get the final architecture from the learned architecture parameters.
\end{algorithmic}
\end{algorithm}

\subsection{Improving Memory and Computational Efficiency}
A direct implementation of the proposed LP formulation is computationally impractical for modern deep neural networks because the number of model parameters is typically several orders of magnitude larger than the number of architecture parameters, i.e., $p \ll q$. Consequently, computing and storing the Hessian with respect to all model parameters incurs prohibitive memory and computational costs.

To address this challenge, LP-NAS computes a \emph{reduced Hessian} by retaining all architecture parameters while considering only a small subset of the model parameters. Specifically, all $p$ architecture parameters and only $\kappa q$ selected model parameters ($\kappa \ll 1$) are used to construct the LP. Accordingly, the validation gradient is also computed only for these selected parameters, resulting in a reduced LP that is significantly more memory- and compute-efficient while preserving the essential second-order information required for optimization. The detailed formulation, computational complexity, and memory analysis of the reduced Hessian are provided in Appendix~\ref{app5}. The following section describes the strategies used to select the subset of model parameters employed in the reduced Hessian computation.

\subsection{Model Parameter Selection for Reduced Hessian Computation}

To compute the reduced Hessian matrix efficiently, LP-NAS computes the Hessian with respect to only a subset of the model parameters. We propose two parameter-selection strategies for selecting $\kappa q$ parameters from the total of $q$ model parameters.\\

\noindent \textbf{Specific parameter selection (S-LP-NAS).}
This strategy selects the model parameter tensor that is expected to have the greatest influence on the optimization process. The selection is based on the normalized magnitude of the training-loss gradient, referred to as the Gradient Norm Per Parameter (GNPP). The tensor with the largest GNPP value is selected, and all parameters within that tensor are used for reduced Hessian computation. Since this strategy deterministically selects a specific tensor, the resulting algorithm is referred to as \textbf{S-LP-NAS}, where \emph{S} denotes \emph{specific}.\\

\noindent \textbf{Random parameter selection (R-LP-NAS).}
As an alternative, we randomly select one or more model parameter tensors whose sizes satisfy a predefined upper bound on the reduced Hessian size. The reduced Hessian is then computed using all parameters belonging to the selected tensors. By varying the selected tensors across iterations, this strategy provides diverse update opportunities for different parts of the network. The corresponding algorithm is referred to as \textbf{R-LP-NAS}, where \emph{R} denotes \emph{random}.

The mathematical formulation of both parameter-selection strategies is provided in Appendix~\ref{app6}.

\subsection{Time Complexity}
In Algorithm~\ref{lpnasalgo}, let \( k^{\text{max}} \) denote the number of outer iterations in the bilevel optimization framework. Each outer iteration consists of three main components: (i) an L-BFGS-based Hessian approximation with memory size \( m \), (ii) an LP-based architecture update, and (iii) lower-level model optimization. Let \( p \) denote the number of architecture parameters and \( \kappa q \) the number of selected model parameters. Define the total number of optimization variables as \( n = p + \kappa q \). Further, let \( E_D \) denote the number of training samples processed per outer iteration with batch size \( B \), and let \( C_P \) represent the cost of a single forward–backward pass. The overall simplified complexity can be given as follows (see Appendix~\ref{app7} for details).
\begin{equation}
O\!\left(
k^{\text{max}} \left[
n^2 + \frac{E_D}{B} C_P
\right]
\right).
\end{equation}
This shows that the algorithm scales quadratically with the number of optimization variables and linearly with the effective data processed per iteration. All components are polynomial in problem size, and no exponential or combinatorial operations are involved, making the method computationally tractable for practical NAS settings. In practice, the dominant costs arise from the LP solve and the lower-level training.

\section{Experimental Results}\label{sec:exp_results}
In this section, we perform architecture searches using the two LP-DARTS approaches and compare their performance with DARTS. The loss and accuracy during the bilevel search are recorded for comparison purposes. The experiments utilized Convolutional Neural Network (CNN) for multi-class classification on the CIFAR-10 and CIFAR-100 datasets. Each dataset consists of 50,000 training and 10,000 test images, all in RGB format at \(32 \times 32\) pixels. CIFAR-10 contains 10 classes, while CIFAR-100 has 100 classes. All experiments were conducted on a high-memory HPC node featuring Intel Xeon Platinum 8268 processor (2.9 GHz, 48 cores) with 768 GB DDR4 RAM, and a 480 GB SSD. Despite the processor's multi-core capability, we restricted all executions to a single core for a fair comparison. After conducting the architecture search experiments, we evaluated the resulting architectures on their respective datasets. To evaluate the transferability and generalization capability of the searched architectures, architectures obtained from architecture search on CIFAR-10 were also evaluated on the ImageNet32 dataset. The ImageNet32 dataset used here is a downsampled version of the ILSVRC 2012 ImageNet classification dataset, comprising 1{,}281{,}167 training images and 50{,}000 validation images, uniformly resized to $32 \times 32$ pixels in RGB format. It includes 1{,}000 object classes, preserving the fine-grained classification challenge of the original ImageNet while maintaining computational tractability. The architecture search and evaluation results are detailed as follows.

\subsection{Architecture Search}
The experimental configuration followed the guidelines from the original DARTS study \cite{DBLP:conf/iclr/LiuSY19}. For all algorithms (DARTS, S-LP-DARTS and R-LP-DARTS), the search for optimal cells was conducted with a batch size of 96, splitting the dataset into equal-sized training and validation sets. The architecture consisted of 8 cells: 6 normal cells and 2 reduction cells. The two reduction cells were positioned at 1/3 and 2/3 distances deep in the network. All operations in the normal cells were performed with stride 1, while stride 2 was used in the reduction cells to downsample the feature maps by half. The operation set $\mathcal{O}$ considered in this work is the same as that used in DARTS article \cite{DBLP:conf/iclr/LiuSY19}.

We employed stochastic gradient descent (SGD) with momentum to optimize the model parameters. The initial learning rate was set to \(2.5 \times 10^{-2}\), momentum to 0.9, and weight decay to \(3.0 \times 10^{-4}\), with a cosine annealing schedule to gradually reduce the learning rate to \(2.5 \times 10^{-5}\). Architecture search using LP-DARTS used a constant step size of \(2.75 \times 10^{-3}\) for HLS across all epochs. We performed two architecture updates per epoch in all our architecture search experiments while optimizing architecture using LP-DARTS. This required the computation of the reduced Hessian matrix, reduced validation gradient, and the subsequent formulation and solution of the reduced LP \eqref{devised_LP} for each update. We utilized three random mini-batches from the training and validation sets to approximate the reduced Hessian and compute the reduced validation gradient. In R-LP-DARTS, the size of each selected model parameter tensor was limited to 1025 \((\varkappa_{CIFAR-10} \approx 0.053 \%, \varkappa_{CIFAR-100} \approx 0.052 \%)\) and we used only one model parameter tensor for HLS. To solve the LP formulations during optimization, we used the IBM CPLEX solver via the DOcplex Python API. Since CPLEX predominantly requires CPU resources, S-LP-DARTS and R-LP-DARTS necessitated the execution on a CPU. To ensure a fair comparison, DARTS was also run on the CPU.
\begin{table}[htpb]
\centering
\caption{Performance metrics of NAS methods on CIFAR-10 and CIFAR-100.}
\label{tab:combined_nas_search}
\footnotesize
\resizebox{0.85\textwidth}{!}{
\begin{tabular}{lc@{\hspace{0.4cm}}c@{\hspace{0.6cm}}c@{\hspace{0.4cm}}c}
\toprule
\multirow{2}{*}{\textbf{NAS Alg.}}
& \multicolumn{2}{c}{\textbf{CIFAR-10}}
& \multicolumn{2}{c}{\textbf{CIFAR-100}} \\
\cmidrule(lr){2-3} \cmidrule(lr){4-5}
& \textbf{Val. Acc. (\%)}
& \shortstack{\textbf{Search Cost}\\\textbf{(CPU hours)}}
& \textbf{Val. Acc. (\%)}
& \shortstack{\textbf{Search Cost}\\\textbf{(CPU hours)}} \\
\midrule
DARTS      & $81.93 \pm 0.31$ & $26.30 \pm 0.56$ & $49.69 \pm 0.86$ & $26.12 \pm 0.40$ \\
S-LP-DARTS & $84.94 \pm 0.31$ & $25.30 \pm 0.22$ & $55.51 \pm 0.39$ & $25.84 \pm 0.66$ \\
\tb{R-LP-DARTS} & $\tb{84.96} \pm \tb{0.47}$ & $26.11 \pm 0.42$ & $\tb{55.78} \pm \tb{0.92}$ & $25.40 \pm 0.23$ \\
\bottomrule
\end{tabular}}
\end{table}

\begin{figure}[htb]
\centering

\begin{minipage}{0.48\textwidth}
\centering
\includegraphics[width=\linewidth]{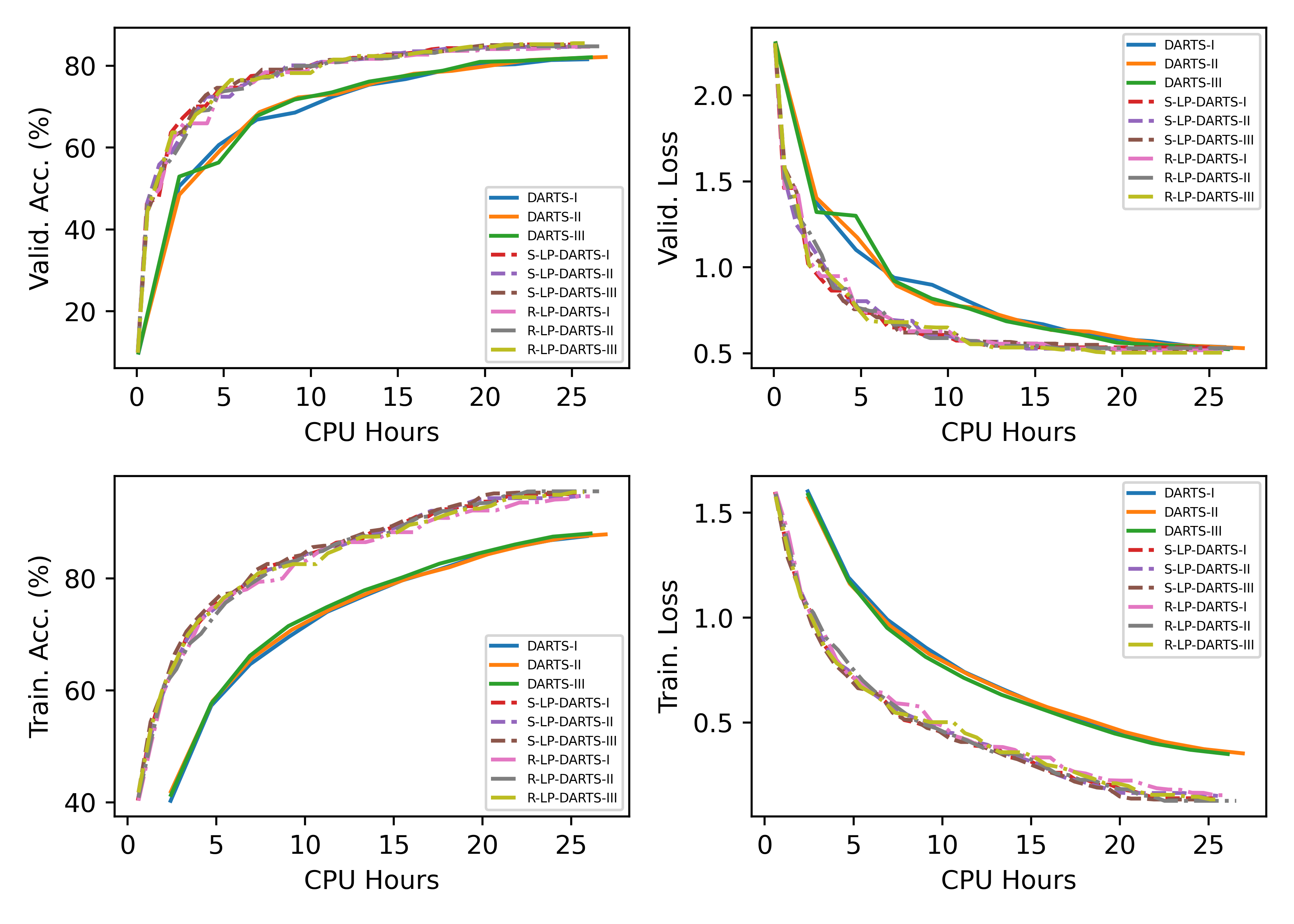}

(a) CIFAR-10
\end{minipage}
\hfill
\begin{minipage}{0.48\textwidth}
\centering
\includegraphics[width=\linewidth]{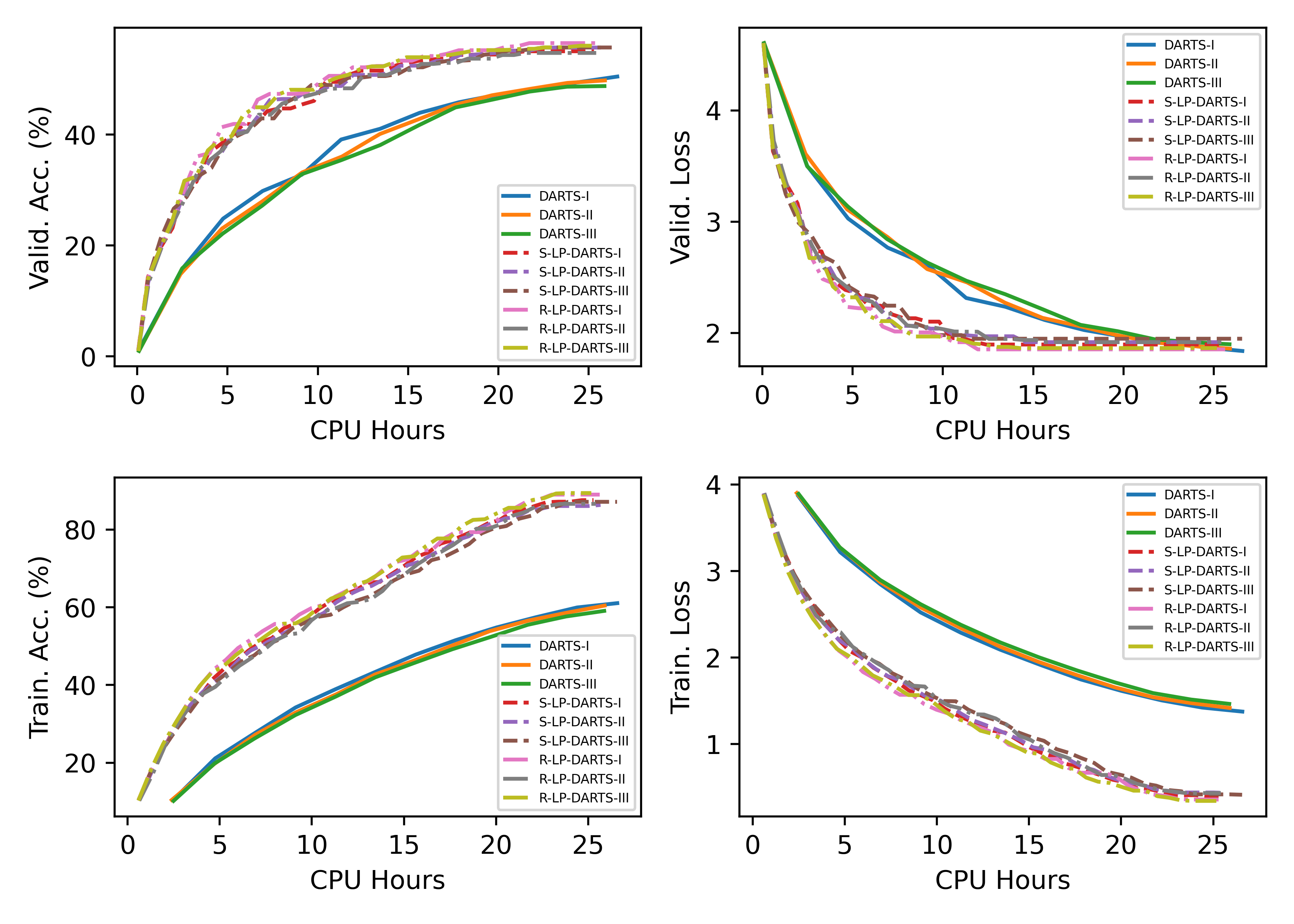}

(b) CIFAR-100
\end{minipage}

\caption{Architecture search performance plots of the algorithms for three independent runs on the CIFAR-10 and CIFAR-100 datasets.}
\label{fig:arch_search}
\end{figure}

S-LP-DARTS and R-LP-DARTS algorithms were executed three times on the CIFAR-10 and CIFAR-100 datasets, each with a different random seed. For comparison, the DARTS algorithm was also run three times using different random seeds. The performance trends observed during architecture optimization are illustrated in Figure~\ref{fig:arch_search}. The summarized results, including mean and standard deviation of validation accuracy and runtime, are presented in Table~\ref{tab:combined_nas_search}. As shown in Figure \ref{fig:arch_search}, both LP-DARTS approaches achieve improvements in validation accuracy much faster than DARTS. In all experiments, the proposed LP-DARTS approach outperforms DARTS in terms of validation accuracy.

\subsection{Architecture Evaluation}
We evaluate the searched architectures in two phases: the best architecture selection phase and the best architecture evaluation phase. The optimal normal and reduction cells are derived based on the highest validation accuracy achieved during the architecture search process. Once these optimal cells are identified, a larger model is constructed by stacking 20 optimal cells sequentially, with two reduction cells placed at 1/3 and 2/3 depths in the network (similar to the model considered for architecture search). The initial number of channels is increased from 16 to 32 to enhance the model's capacity. To improve generalization, a path dropout rate of 0.2 is applied, along with auxiliary towers weighted at 0.4 and cutout data augmentation. All other hyperparameters remain unchanged from those used during the architecture search. The larger models, built using the optimal cells found through the architecture search on CIFAR-10 and CIFAR-100, are trained from scratch on their respective full training datasets and evaluated on their respective test sets.

For the selection of the best architectures, the architectures derived from three architecture search experiments were trained for approximately 12 hours with a batch size of 576 and subsequently evaluated on the test datasets. The results, summarized in Table~\ref{tab:combined_eval_metrics}, consistently demonstrate the superior performance of the proposed method. Notably, models constructed using architectures searched by the two versions of LP-DARTS also exhibit significantly fewer parameters compared to those obtained with DARTS, as evidenced in Table~\ref{tab:combined_eval_metrics}. To illustrate the performance of the architectures in the evaluation phase, the test accuracy is plotted in Figure~\ref{fig:selection_eval}. Additionally, the visual representations of the best architectures (that is, yellow points in Figure~\ref{fig:selection_eval}) are provided in Appendix~\ref{app8}. To further verify the performance of the architectures, we once again train the model parameters for each of architectures (yellow points in Figure~\ref{fig:selection_eval}) for an extended period of time (that is 20 hours) and evaluate the models once again on the test dataset. 
The results of this final evaluation are presented in Table~\ref{tab:combined_cifar10_cifar100}. The training and validation performance of the models corresponding to the best architectures are presented in Figure~\ref{fig:selection_eval}. 

\begin{table}[ht]
\centering
\caption{Evaluation metrics for the architectures on CIFAR-10 and CIFAR-100.}
\label{tab:combined_eval_metrics}
\footnotesize
\resizebox{\textwidth}{!}{
\begin{tabular}{lc@{\hspace{0.25cm}}c@{\hspace{0.25cm}}c@{\hspace{0.5cm}}c@{\hspace{0.25cm}}c@{\hspace{0.25cm}}c}
\toprule
\multirow{2}{*}{\textbf{NAS Alg.}}
& \multicolumn{3}{c}{\textbf{CIFAR-10}}
& \multicolumn{3}{c}{\textbf{CIFAR-100}} \\
\cmidrule(lr){2-4} \cmidrule(lr){5-7}
& \textbf{Test Acc. (\%)}
& \shortstack{\textbf{Eval. Cost}\\\textbf{(CPU hours)}}
& \textbf{Params (M)}
& \textbf{Test Acc. (\%)}
& \shortstack{\textbf{Eval. Cost}\\\textbf{(CPU hours)}}
& \textbf{Params (M)} \\
\midrule
DARTS
& $70.83 \pm 2.09$
& $11.55 \pm 0.48$
& $3.40 \pm 0.10$
& $23.84 \pm 9.48$
& $11.85 \pm 0.44$
& $3.64 \pm 0.12$ \\

\tb{S-LP-DARTS}
& $89.04 \pm 1.26$
& $11.43 \pm 0.24$
& $1.41 \pm 0.22$
& $\tb{63.94} \pm \tb{3.61}$
& $11.78 \pm 0.18$
& $1.59 \pm 0.23$ \\

\tb{R-LP-DARTS}
& $\tb{89.87} \pm \tb{0.82}$
& $12.08 \pm 0.18$
& $1.66 \pm 0.28$
& $58.92 \pm 4.56$
& $12.02 \pm 0.30$
& $1.75 \pm 0.41$ \\
\bottomrule
\end{tabular}}
\end{table}

\begin{figure}[htb]
\centering

\begin{minipage}{0.49\textwidth}
\centering
\includegraphics[width=\linewidth]{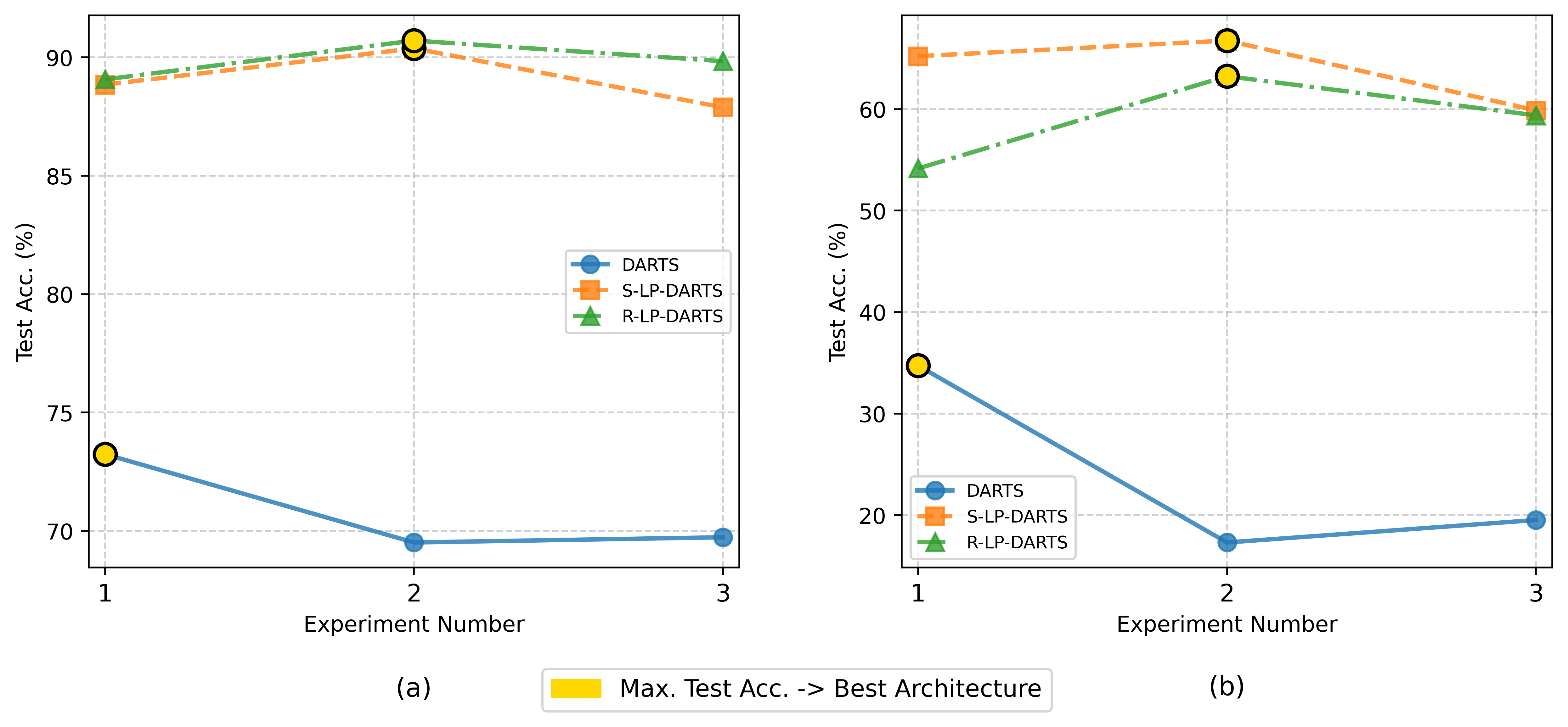}

(a) Architecture Selection
\end{minipage}
\hfill
\begin{minipage}{0.49\textwidth}
\centering
\includegraphics[width=\linewidth]{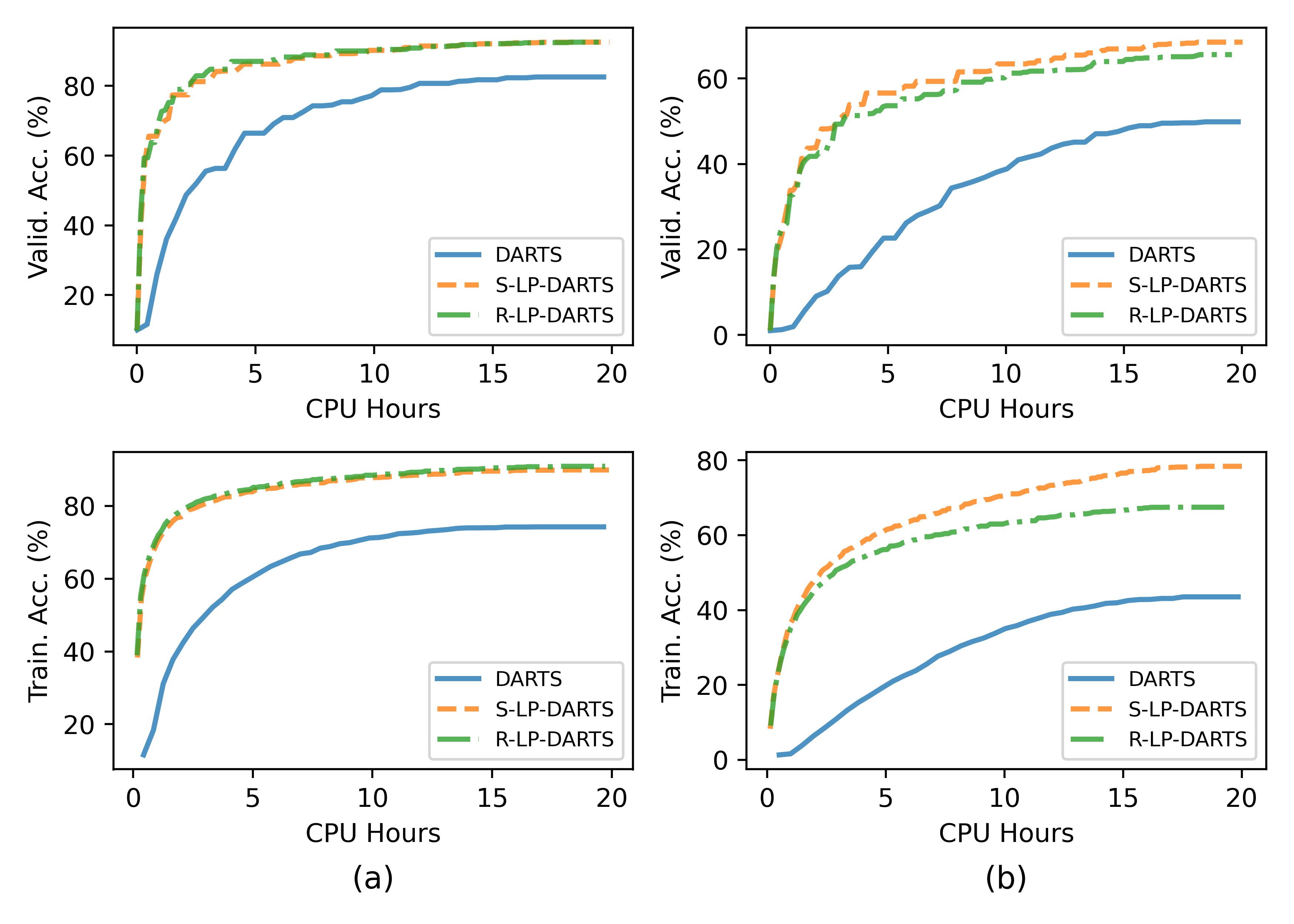}

(b) Architecture Evaluation
\end{minipage}

\caption{(a) Matrix plots illustrating architecture selection for the CIFAR-10 and CIFAR-100 datasets. (b) Training and validation performance of the best architectures on the CIFAR-10 and CIFAR-100 datasets.}
\label{fig:selection_eval}
\end{figure}

\begin{table}[ht]
\centering
\caption{Evaluation metrics for the best architectures on CIFAR-10 and CIFAR-100.}
\label{tab:combined_cifar10_cifar100}
\footnotesize
\resizebox{\textwidth}{!}{
\begin{tabular}{lc@{\hspace{0.25cm}}c@{\hspace{0.25cm}}c@{\hspace{0.5cm}}c@{\hspace{0.25cm}}c@{\hspace{0.25cm}}c}
\toprule
\multirow{2}{*}{\textbf{Method}} 
& \multicolumn{3}{c}{\textbf{CIFAR-10}} 
& \multicolumn{3}{c}{\textbf{CIFAR-100}} \\
\cmidrule(lr){2-4} \cmidrule(lr){5-7}
 & \textbf{Test Acc. (\%)} 
 & \shortstack{\textbf{Eval. Cost}\\\textbf{(CPU hours)}} 
 & \textbf{Params (M)}
 & \textbf{Test Acc. (\%)} 
 & \shortstack{\textbf{Eval. Cost}\\\textbf{(CPU hours)}} 
 & \textbf{Params (M)} \\
\midrule
DARTS       & 82.57 & 19.66 & 3.38 & 49.86 & 19.85 & 3.50 \\
\tb{S-LP-DARTS}  & \tb{92.56} & 19.91 & 1.66 & \tb{68.50} & 20.04 & 1.33 \\
R-LP-DARTS  & 92.55 & 19.72 & 1.36 & 65.58 & 19.59 & 1.36 \\
\bottomrule
\end{tabular}}
\end{table}

While both variants of LP-DARTS significantly outperformed DARTS in the overall evaluation results, the experimental findings did not indicate a definitive superiority between the two variants of LP-DARTS, as illustrated in Figure~\ref{fig:selection_eval}. However, the best architectures discovered by S-LP-DARTS achieved superior performance compared to those identified by R-LP-DARTS across both datasets, as presented in Table~\ref{tab:combined_cifar10_cifar100}.

\subsection{Architecture Transferability}\label{sec:comparative}
To evaluate the transferability of the learned architectures, we assess models on ImageNet using architectures obtained from CIFAR-10 via several NAS methods, including DARTS, LP-DARTS, P-DARTS, PC-DARTS, and STO-DARTS. Each model comprises 14 cells, with reduction cells placed at one-third and two-thirds of the total depth, and an initial channel size of 48. All models are trained from scratch on ImageNet for approximately 100 hours using a batch size of 576, an initial learning rate of 0.1, and a decay factor of 0.97, closely following the original DARTS evaluation protocol. The performance of these models is summarized in Table~\ref{tab:combined_cifar10_imagenet}. The results indicate that LP-DARTS achieves competitive transferability performance from CIFAR-10 to ImageNet while maintaining the lowest model complexity among the considered methods.

Notably, the architectures for DARTS and LP-DARTS are obtained under a controlled and comparable search budget, with all computations performed on CPUs. These architectures are selected during the early stages of the search process-well before full convergence-to reflect a realistic, resource-constrained setting. In contrast, the architectures for P-DARTS, PC-DARTS, and STO-DARTS variants are adopted from their original works~\cite{chen2019progressive, xu2019pcdarts, cai2024sto}, where the search is carried out to convergence using significantly greater computational resources, typically on GPUs. This distinction underscores the efficiency and practical viability of LP-DARTS in identifying high-quality, transferable architectures under limited computational budgets. Overall, despite operating under comparable or more restrictive search and evaluation conditions, LP-DARTS achieves competitive test performance with highest parameter efficiency, demonstrating their effectiveness in discovering competitive and transferable architectures.

\begin{table}[htb]
\centering
\caption{Evaluation metrics for the best architectures on CIFAR-10 and their transferability on ImageNet under time-constrained settings.}
\label{tab:combined_cifar10_imagenet}
\footnotesize
\resizebox{\textwidth}{!}{
\begin{tabular}{lc@{\hspace{0.25cm}}c@{\hspace{0.25cm}}c@{\hspace{0.5cm}}c@{\hspace{0.25cm}}c@{\hspace{0.25cm}}c}
\toprule
\multirow{2}{*}{\textbf{Method}} & \multicolumn{3}{c}{\textbf{CIFAR-10}} & \multicolumn{3}{c}{\textbf{ImageNet}} \\
\cmidrule(lr){2-4} \cmidrule(lr){5-7}
 & \textbf{Test Acc. (\%)}
 & \shortstack{\textbf{Eval. Cost}\\\textbf{(CPU hours)}} 
 & \textbf{Params (M)}
 & \textbf{Top-1 (5) Acc. (\%)} 
 & \shortstack{\textbf{Eval. Cost}\\\textbf{(CPU hours)}} 
 & \textbf{Params (M)} \\
\midrule
DARTS & 82.57 & 19.66 & 3.38 & 39.09 (64.10) & 99.36 & 5.81 \\
P-DARTS & 90.23 & 21.67 & 2.74 & 44.60 (69.38) & 98.96 & 4.94 \\
PC-DARTS & 89.10 & 20.84 & 2.90 & 42.40 (67.39) & 101.53 & 5.27 \\
STO-DARTSv1 & 89.04 & 19.79 & 2.20 & 42.72 (67.78) & 101.52 & 4.15 \\
STO-DARTSv2 & 87.89 & 20.85 & 3.06 & 42.89 (68.05) & 102.32 & 5.37 \\
\tb{LP-DARTS (ours)} & \tb{92.56} & 19.91 & 1.66 & 42.50 (67.55) & 99.32 & 3.46 \\
\bottomrule
\end{tabular}}
\end{table}

\subsection{Analysis and Discussion}
The comparative results summarized from the NAS literature \cite{zoph2018learning, pham2018efficient, real2019regularized, DBLP:conf/iclr/LiuSY19, xu2019pcdarts, cai2018proxylessnas, wu2019fbnet, tan2019mnasnet, tan2019efficientnet, cai2019once, chen2021autoformer, gao2019graphnas, chen2019progressive, cai2024sto} (Appendix~\ref{app9}) reveal several noteworthy trends. Early NAS methods such as NASNet and AmoebaNet, which rely on RL or evolutionary strategies, achieve strong accuracy but demand thousands of GPU-days due to discrete search and full training of candidate architectures. The introduction of differentiable NAS methods such as DARTS and PC-DARTS marked a breakthrough in efficiency, reducing computational cost by three orders of magnitude while maintaining competitive accuracy. Hardware-aware designs such as ProxylessNAS, FBNet, and MnasNet further refined this efficiency by explicitly incorporating latency constraints during search, making them viable for mobile and embedded systems. Meta-learning approaches, such as Once-for-All and Autoformer, illustrate a shift toward scalable and adaptive NAS frameworks that can generalize across architectures and modalities. Once-for-All introduces elastic supernetworks, allowing the reuse of pre-trained subnetworks across multiple deployment scenarios, while Autoformer demonstrates that NAS principles extend beyond CNNs to transformer-based architectures. Collectively, these methods highlight an ongoing trend toward generalizable and hardware-efficient NAS. 

Table~\ref{tab:combined_cifar10_imagenet} shows that LP-DARTS achieves superior performance compared to several DARTS variants under the time-constrained experimental settings on CIFAR-10, and remains competitive in the architecture transferability evaluation on ImageNet. This is particularly noteworthy given that the architectures for P-DARTS, PC-DARTS, and STO-DARTS are adopted from their original studies, where extensive experimentation and greater computational resources were used to identify optimal configurations. Furthermore, LP-DARTS attains the lowest model complexity among the considered methods.

The superior performance of the proposed method arises from the inherent strength of its bilevel optimization framework. Unlike conventional DARTS variants, which approximate the hypergradient using only partial information, our LP formulation explicitly enforces lower-level optimality at each update. In particular, the solution of the LP provides a coupled direction $(d_A^\ast, d_W^\ast)$, where $d_A^\ast$ denotes the descent direction for the architecture variables and $d_W^\ast$ represents the consistent adjustment of network weights that preserves training optimality. This joint update mechanism effectively captures the interdependence between the upper-level validation objective and the lower-level training problem, thereby avoiding the instability that often limits DARTS-style methods, which do not ensure the optimality of the model parameters after architecture updates. Also, it can be easily shown that the original SOCP, which computes the steepest descent direction through its closed-form solution, yields the same direction for the architecture parameters as hypergradient-based methods \cite{giovannelli2025inexact}. Notably, exact hypergradient computation requires the inversion of the training-loss Hessian, whereas the proposed LP-based NAS framework operates directly on the Hessian, thereby avoiding this computationally expensive operation. More importantly, the proposed formulation is not restricted to the classical DARTS search space and is broadly applicable to a wide range of differentiable NAS settings, making it relevant to expert system applications based on deep learning.

\section{Conclusion}\label{sec:conclusions}
In this paper, we have introduced LP-NAS, an approach for architecture search within the bilevel optimization framework. The approach requires an LP to be solved in each iteration to obtain a descent direction for validation loss while ensuring that the training loss remains optimal. Applied to CNNs in DSS, LP-NAS surpasses the performance of DARTS, discovering cell architectures that lead to superior results. It also outperforms several DARTS variants under time-constrained experimental settings on the CIFAR-10 dataset while remaining competitive in architecture transferability to the ImageNet dataset.

A key challenge in LP-NAS lies in the calculation of the Hessian matrix and solving the LP. To cater to the computational issues associated with these challenges we use a reduced Hessian, approximate it using L-BFGS, and formulate a reduced LP. The use of limited second-order information from the lower-level is still advantageous as the method quickly leads to better losses and accuracies in all our experiments as compared to the standard DARTS algorithm. The proposed LP based approach can be integrated with any architecture search problem with continuous architecture and model parameters. Interesting future research can be performed on appropriate choice of model parameters for the formulation of the reduced Hessian such that maximum amount of second-order information is extracted. Additionally, approximate Hessians with special structures, such as, block diagonal, block angular, block triangular, staircase, and bordered angular, can be formulated for faster solution of the LP. The results are promising and we believe it has the potential to open a new line of research on solving architecture search problems using bilevel methods by utilizing second-order information.

\section*{Acknowledgments}
The authors gratefully acknowledge the High Performance Computing (HPC) facility at the Indian Institute of Technology (IIT) Kanpur (PARAM Sanganak) for providing the resources that enabled the smooth execution of the experiments in this work.

\bibliography{main}

\appendix
\renewcommand{\thesubsection}{\Alph{subsection}}
\renewcommand{\thesubsubsection}{\Alph{subsection}.\arabic{subsubsection}}

\section*{\centering \Huge Appendix}

\section{DARTS} \label{app1}
The DARTS method, introduced in \cite{DBLP:conf/iclr/LiuSY19}, presented a highly efficient and competitive NAS approach compared to RL- and EC-based methods, functioning within a continuous search space. Building on DARTS, Progressive Differentiable Architecture Search (P-DARTS) was developed by \cite{chen2019progressive}, which addressed computational challenges and enhanced search stability through search space approximation and regularization techniques. To further improve the robustness of DARTS, \cite{zela2019understanding} proposed Robustness of Differentiable Architecture Search (R-DARTS), which systematically investigated architectural spaces and regularization strategies. In \cite{xu2019pcdarts}, the authors introduced an innovative method known as Partially-Connected Differentiable Architecture Search (PC-DARTS) with the obvious goal of improving the efficiency and stability of NAS. Another extension, Att-DARTS~\cite{nakai2020att}, incorporated attention modules into the DARTS framework to further enhance the architecture search process. The manifestation of the collapse phenomenon in DARTS, characterized by an excessive occurrence of skip-connects over an extensive number of search epochs due to overfitting of the one-shot model, resulted in diminished performance. This challenge was effectively addressed by introducing early stopping criteria in DARTS+ \cite{liang2019darts+}, where DARTS stops when two or more skip-connects appear in a normal cell or when the architecture parameter ranking remains stable for a predefined number of epochs. Fair DARTS \cite{fair-darts} was introduced to tackle issues of collapse observed in the DARTS algorithm due to unfair advantages in exclusive competition among candidate operations during the search process. DE-DARTS developed by \cite{de-darts} addressed the challenges of gradient-based NAS by proposing a novel approach that incorporated Dynamic Attention Networks (DANs). More recently, EG-DARTS \cite{zhang2023enhanced} used a multi-objective evolution-based approach, combining gradient optimization with evolutionary strategies to improve DARTS’ search effectiveness. Additional advancements have been made with methods like Relax-DARTS \cite{zhu2024relax}, STO-DARTS \cite{cai2024sto}, OSTR-DARTS \cite{yang2024ostr}, semantic DARTS \cite{guo2024semantic}, LMD-DARTS \cite{li2024lmd}, HN-DARTS \cite{li2024hn}, and ZEN-DARTS \cite{wang2026zen}, among others, highlighting continued innovation within the DARTS framework.

\section{LP for HLS} \label{app2}
Assuming that for any given set of hyperparameters, a solution to the lower-level problem always exists. Additionally, it is required that the validation loss function be at least once differentiable and the training loss function be at least twice differentiable.

By writing the linear Taylor's approximation of the validation loss function around the point \((A^0, W^0)\), we obtain the following expansion:
\begin{equation*}
    \begin{aligned}
\mathcal{L}_v(A^0 + t d_A, W^0 + t d_W)=\\
& \hspace{-3cm} \mathcal{L}_v(A^0, W^0) + t \left \langle \nabla_A \mathcal{L}_v(A^0, W^0), d_A \right\rangle\\
& \hspace{-3cm} + t\left \langle \nabla_W \mathcal{L}_v(A^0, W^0), d_W \right\rangle\\
\end{aligned}
\end{equation*}

where \(t\) is the step-size greater than zero and $[d_A, d_W]^T$ is a direction vector.

\begin{equation*}
    \begin{aligned}
\mathcal{L}_v(A^0 + t d_A, W^0 + t d_W)-\mathcal{L}_v(A^0, W^0)=\\
& \hspace{-3cm}  t\left\langle \begin{bmatrix}
    \nabla_A \mathcal{L}_v(A^0, W^0) \\
    \nabla_W \mathcal{L}_v(A^0, W^0)
\end{bmatrix}, \begin{bmatrix}
    d_A \\
    d_W
\end{bmatrix} \right\rangle
\end{aligned}
\end{equation*}

From the above equation, it can be concluded that the direction vector is a descent direction (validation loss improves) if and only if the inner product of the gradient of the upper-level objective function with the direction vector is negative. This can be mathematically expressed as  
\[
\left\langle \begin{bmatrix}
    \nabla_A \mathcal{L}_v(A^0, W^0) \\
    \nabla_W \mathcal{L}_v(A^0, W^0)
\end{bmatrix}, \begin{bmatrix}
    d_A \\
    d_W
\end{bmatrix} \right\rangle < 0
\]
where \(\nabla_A \mathcal{L}_v\) and \(\nabla_W \mathcal{L}_v\) represent the gradients of the validation loss function with respect to \(A\) and \(W\) respectively. Moreover, this descent direction must ensure that the lower-level problem remains optimal as the upper-level variable \(A\) changes along the direction \(d_A\). This imposes the condition  
\begin{equation}\label{lowerleveldir}
    d_W \in \arg\min_{d_W} \mathcal{L}_t(A^0 + t d_A, W^0 + t d_W)
\end{equation}

which ensures that \(d_W\) is the direction minimizing the lower-level objective with respect to \(W\), given the change in \(A\) along the direction \(d_A\).

By writing the quadratic approximation of the lower-level objective around the point \((A^0, W^0)\), we obtain the following expansion:
\begin{equation}\label{Taylor}
    \begin{aligned}
\mathcal{L}_t(A^0 + t d_A, W^0 + t d_W)=\\
& \hspace{-3cm} \mathcal{L}_t(A^0, W^0) + t \left \langle \nabla_A \mathcal{L}_t(A^0, W^0), d_A \right\rangle\\
& \hspace{-3cm} + t\left \langle \nabla_W \mathcal{L}_t(A^0, W^0), d_W \right\rangle\\
& \hspace{-3cm} + \frac{1}{2}t^2\left\langle \begin{bmatrix}
    d_A \\
    d_W
\end{bmatrix}, \nabla_{(A, W)} ^2 \mathcal{L}_t(A^0, W^0)\begin{bmatrix}
    d_A \\
    d_W
\end{bmatrix}  \right\rangle
\end{aligned}
\end{equation}

where \(\nabla_{(A, W)}^2 \mathcal{L}_t(A^0, W^0)\) denotes the Hessian matrix of the training loss function \((\mathcal{L}_t)\) with respect to \(A\) and \(W\), evaluated at \((A^0, W^0)\).

At the optimal solution of the lower-level objective, the second and third terms vanish. Ignoring the constant first term, the lower-level problem \eqref{lowerleveldir} reduces to the following optimization problem:
\begin{equation}\label{lowerleveldir_1}
    d_W \in \arg\min_{d_W} \left\langle 
    \begin{bmatrix}
        d_A \\
        d_W
    \end{bmatrix}, 
    \nabla_{(A, W)}^2 \mathcal{L}_t(A^0, W^0) 
    \begin{bmatrix}
        d_A \\
        d_W
    \end{bmatrix}  
    \right\rangle
\end{equation}

Let \(p\) and \(q\) denote the number of hyperparameters and model parameters, respectively. The Hessian matrix of the training loss can be expressed as follows:
\begin{equation}\label{hessian_matrix}
\begin{bmatrix}
    H_{ij}
\end{bmatrix}_{i=1, j=1}^{i=p+q, j=p+q} = \nabla_{(A, W)}^2 \mathcal{L}_t(A^0, W^0)
\end{equation}

Let
\begin{equation}\label{hessian_matrices}
\begin{aligned}
\begin{bmatrix}
    H_{ij}
\end{bmatrix}_{i=1, j=1}^{i=p+q, j=p+q} 
&= \begin{bmatrix}
    [H_{ij}]_{i=1, j=1}^{i=p, j=p} & [H_{ij}]_{i=1, j=p+1}^{i=p, j=p+q} \\
    [H_{ij}]_{i=p+1, j=1}^{i=p+q, j=p} & [H_{ij}]_{i=p+1, j=p+1}^{i=p+q, j=p+q}
\end{bmatrix} \\
&= \begin{bmatrix}
    H_{11} & H_{12} \\
    H_{21} & H_{22}
\end{bmatrix}
\end{aligned}
\end{equation}

Since the problem \eqref{lowerleveldir_1} is an unconstrained optimization problem, the first-order optimality conditions can be derived by taking the gradient of the objective function with respect to \(d_W\) and setting it equal to zero, which reduces to:
\begin{equation}\label{firstordercondition}
\begin{aligned}
    \begin{bmatrix}
        H_{21} & H_{22}
    \end{bmatrix} 
    \begin{bmatrix}
        d_A \\
        d_W
    \end{bmatrix} = 0 \\
\end{aligned}
\end{equation}
Ensuring the lower-level optimality conditions, if we aim to find the unit vector with the steepest descent direction for validation loss, it leads to solving the following SOCP problem:

\begin{equation}\label{devised_SOCP}
	\begin{aligned}
		\min_{d_A, d_W} \quad & \left\langle \begin{bmatrix}
    \nabla_A \mathcal{L}_v(A^0, W^0) \\
    \nabla_W \mathcal{L}_v(A^0, W^0)
\end{bmatrix}, \begin{bmatrix}
    d_A \\
    d_W
\end{bmatrix} \right\rangle \\
		\text{subject to:} \quad & \begin{bmatrix}
        H_{21} & H_{22}
    \end{bmatrix} 
    \begin{bmatrix}
        d_A \\
        d_W
    \end{bmatrix} = 0\\
		\quad & \|d_A\|_2 \leq 1
	\end{aligned}
\end{equation}

This SOCP problem can be relaxed to the following usable LP to give a descent direction:

\begin{equation}
	\begin{aligned}
		\min_{d_A, d_W} \quad & \left\langle \begin{bmatrix}
    \nabla_A \mathcal{L}_v(A^0, W^0) \\
    \nabla_W \mathcal{L}_v(A^0, W^0)
\end{bmatrix}, \begin{bmatrix}
    d_A \\
    d_W
\end{bmatrix} \right\rangle \\
		\text{subject to:} \quad & \begin{bmatrix}
        H_{21} & H_{22}
    \end{bmatrix} 
    \begin{bmatrix}
        d_A \\
        d_W
    \end{bmatrix} = 0\\
		\quad & -1 \leq d_A \leq 1 \\
	\end{aligned}
\end{equation}

\section{Proof of Proposition~1} \label{app3}

\begin{proof}
The affine (linear) constraints of the LP can be written as:
\begin{equation}\label{affine_constraints}
	\begin{aligned}
         H_{21}d_A + H_{22}d_W
        = 0 \\
	\end{aligned}
\end{equation}

From equations \eqref{Taylor}, \eqref{hessian_matrix} and \eqref{hessian_matrices}, we have
\begin{equation}\label{Taylor_1}
    \begin{aligned}
\mathcal{L}_t(A^0 + t d_A, W^0 + t d_W)=\\
& \hspace{-3cm} \mathcal{L}_t(A^0, W^0) + t \left \langle \nabla_A \mathcal{L}_t(A^0, W^0), d_A \right\rangle\\
& \hspace{-3cm} + t\left \langle \nabla_W \mathcal{L}_t(A^0, W^0), d_W \right\rangle\\
& \hspace{-3cm} + \frac{1}{2}t^2\left\langle \begin{bmatrix}
    d_A \\
    d_W
\end{bmatrix}, \begin{bmatrix}
    H_{11} & H_{12} \\
    H_{21} & H_{22}
\end{bmatrix} \begin{bmatrix}
    d_A \\
    d_W
\end{bmatrix}  \right\rangle
\end{aligned}
\end{equation}

From equations \eqref{affine_constraints} and \eqref{Taylor_1}
\begin{equation}\label{Taylor_3}
    \begin{aligned}
\mathcal{L}_t(A^0 + t d_A, W^0 + t d_W)=\\
& \hspace{-3cm} \mathcal{L}_t(A^0, W^0) + t \left \langle \nabla_A \mathcal{L}_t(A^0, W^0), d_A \right\rangle\\
& \hspace{-3cm} + t\left \langle \nabla_W \mathcal{L}_t(A^0, W^0), d_W \right\rangle\\
& \hspace{-4cm} + \frac{1}{2}t^2\left\langle 
    d_A, H_{11}d_A + H_{12}d_W \right\rangle
\end{aligned}
\end{equation}

Writing \(d_W\) in terms of \(d_A\) and Hessian matrices using equation \eqref{affine_constraints}

\begin{equation}\label{d_A_d_W}
   d_W = -H_{22}^{-1} H_{21}d_A 
\end{equation}

Building upon Equations~\eqref{Taylor_3} and \eqref{d_A_d_W}
\begin{equation}\label{Taylor_4}
    \begin{aligned}
\mathcal{L}_t(A^0 + t d_A, W^0 + t d_W)=\\
& \hspace{-3cm} \mathcal{L}_t(A^0, W^0) + t \left \langle \nabla_A \mathcal{L}_t(A^0, W^0), d_A \right\rangle\\
& \hspace{-3cm} + t\left \langle \nabla_W \mathcal{L}_t(A^0, W^0), d_W \right\rangle\\
& \hspace{-4cm} + t^2\left\langle 
    d_A, (H_{11} - H_{12}H_{22}^{-1}H_{21})d_A \right\rangle
\end{aligned}
\end{equation}

Assuming the pseudo-inverse of \(H_{21}\), denoted as \(H_{21}^\dagger\), we can express \(d_A\) in terms of \(d_W\) as
\begin{equation}\label{d_W_d_A}
    d_A = -H_{21}^\dagger H_{22} d_W
\end{equation}

Substituting \(d_A\) from Equation~\eqref{d_W_d_A} into Equation~\eqref{Taylor_4}, we rewrite the Taylor's expansion in terms of \(d_W\)

\begin{equation*}\label{Taylor_5}
    \begin{aligned}
\mathcal{L}_t(A^0 + t d_A, W^0 + t d_W)=\\
& \hspace{-3cm} \mathcal{L}_t(A^0, W^0) + t \left \langle \nabla_A \mathcal{L}_t(A^0, W^0), -H_{21}^\dagger H_{22} d_W \right\rangle\\
& \hspace{-3cm} + t\left \langle \nabla_W \mathcal{L}_t(A^0, W^0), d_W \right\rangle\\
& \hspace{-4cm} + t^2\left\langle 
    H_{21}^\dagger H_{22} d_W, (H_{11} - H_{12}H_{22}^{-1}H_{12}^T)H_{21}^\dagger H_{22} d_W \right\rangle
\end{aligned}
\end{equation*}

The second-order gradient with respect to the updated model parameters, \((W^0 + t d_W)\), yields the new Hessian matrix
\begin{equation*}\label{newH}
   H'' = (H_{21}^\dagger H_{22})^T(H_{11} - H_{12}H_{22}^{-1}H_{12}^T)H_{21}^\dagger H_{22} 
\end{equation*}

where \(S = H_{11} - H_{12}H_{22}^{-1}H_{12}^T\) represents the Schur complement of \(H_{22}\) in the original Hessian matrix
$\begin{bmatrix} H_{ij} \end{bmatrix}_{i=1, j=1}^{i=p+q, j=p+q}$.

If the original Hessian matrix is positive semidefinite (PSD), its Schur complement \(S\) is also guaranteed to be PSD. \(H''\) will be PSD if for all vectors \(z \in \mathbb{R}^{q}\), \(z^TH''z \geq 0\). To verify the positive semidefiniteness of \(H''\), we consider an arbitrary vector \(y \in \mathbb{R}^q\)
\[
y^T H'' y = y^T \big((H_{21}^\dagger H_{22})^T S (H_{21}^\dagger H_{22})\big) y
\]
Let \(v = H_{21}^\dagger H_{22} y\), then $y^T H'' y = v^T S v$. Since \(S\) is PSD, it follows that \(v^T S v \geq 0\) for all \(v \in \mathbb{R}^p\). Thus,
\[
y^T H'' y \geq 0 \quad \forall y \in \mathbb{R}^q
\]
which confirms that \(H''\) is PSD. In conclusion, the new Hessian matrix \(H''\) being PSD establishes that the HLS is confined to the lower-level optimality region.
\end{proof}

\section{Proof of Proposition~2} \label{app4}
\begin{proof}
To establish that the optimal objective value is always non-positive, we construct a feasible solution and evaluate its objective value.\\

\noindent \textit{Feasibility of \( (d_A, d_W) = (0,0) \)}:
Choosing \( d_A = 0 \) and \( d_W = 0 \), the affine constraints of the devised LP are trivially satisfied
\begin{align*}
    H_{21} (0) + H_{22} (0) = 0
\end{align*}
Additionally, the box constraints hold as
\begin{align*}
    -1 \leq 0 \leq 1
\end{align*}

\noindent \textit{Objective Function Evaluation}:
At \( (d_A, d_W) = (0,0) \), the objective function evaluates to
\begin{align*}
    \left\langle 
    \begin{bmatrix} 
        \nabla_A \mathcal{L}_v(A^0, W^0) \\
        \nabla_W \mathcal{L}_v(A^0, W^0) 
    \end{bmatrix}, 
    \begin{bmatrix} 
        0 \\ 
        0 
    \end{bmatrix} 
    \right\rangle = 0
\end{align*}
Since the problem is of minimization type, the optimal objective value satisfies
\begin{align*}
    \left\langle 
    \begin{bmatrix} 
        \nabla_A \mathcal{L}_v(A^0, W^0) \\
        \nabla_W \mathcal{L}_v(A^0, W^0) 
    \end{bmatrix}, 
    \begin{bmatrix} 
        d_A^* \\ 
        d_W^* 
    \end{bmatrix} 
    \right\rangle \leq 0
\end{align*}
Thus, the validation performance improves in most cases, and in the worst case, leaves it unchanged.
\end{proof}

\section{Reduced Hessian Formulation for Memory and Computational Efficiency} \label{app5}

In deep learning, the model architecture is characterized by architecture parameters and model parameters. The architecture parameters typically have a much smaller dimension than the model parameters, i.e., $p \ll q$, whereas the number of model parameters can range from millions to billions. Consequently, formulating the LP presents a significant computational challenge due to the size of the Hessian matrix.

For a model with $p$ architecture parameters and $q$ model parameters, the complete Hessian of the training loss has dimension $(p+q)\times(p+q)$. Since the lower-level optimization is performed only with respect to the model parameters, only the submatrix of size $q\times(p+q)$ is required by the proposed LP. For convenience, we continue to refer to this rectangular submatrix as the Hessian matrix throughout the paper.

The memory required to store this matrix is substantial. For example, a model with approximately two million parameters requires nearly $29.8$~TB of memory to store the Hessian in double precision. Furthermore, the computational complexity of explicit Hessian computation is $O(n^2)$, making it prohibitively expensive for modern deep neural networks.

To overcome these limitations, we compute second-order derivatives only for a subset of the model parameters, denoted by $w\subset W$. Specifically, all $p$ architecture parameters together with only $\kappa q$ model parameters are retained, where $\kappa\ll1$. The resulting reduced Hessian,
$[H'_{ij}]_{i=p+1,\;j=1}^{\,i=p+\kappa q,\;j=p+\kappa q},$ has dimension $\kappa q \times (p+\kappa q).$ Similarly, the validation gradient is computed only with respect to the retained parameters, i.e. $[\nabla_A\mathcal{L}_v(A,W),\;
\nabla_w\mathcal{L}_v(A,W)]^{T},$ which we refer to as the \emph{reduced validation gradient}. These quantities are then used to formulate a reduced LP that can be solved efficiently.

Finally, we employ the L-BFGS algorithm to approximate the reduced Hessian. L-BFGS requires only $O(mn)$ computation per iteration, where $m$ denotes the memory parameter (the number of stored correction pairs) and $n$ is the parameter dimension. Consequently, the proposed formulation achieves substantial reductions in both memory consumption and computational time, at the cost of utilizing only partial second-order information from the lower-level problem.

\section{Mathematical Formulation of Model Parameter Selection} \label{app6}

This section presents the mathematical formulation of the two model parameter selection strategies used for reduced Hessian computation in LP-NAS.

In deep learning models, model parameters are typically represented as multi-dimensional tensors. In our experimental configurations, which employ a convolutional neural network (CNN), most parameters are structured as four-dimensional tensors. Let the $i^{\text{th}}$ model parameter tensor be denoted by $\mathbf{W}_i$, with $N_i$ dimensions such that
\[
\mathbf{W}_i \in \mathbb{R}^{d_1^i \times d_2^i \times \cdots \times d_{N_i}^i},
\]
where $d_j^i$ denotes the size of the $j^{\text{th}}$ dimension.

\paragraph{Specific parameter selection (S-LP-NAS).}

Following the lower-level optimization, the gradients of the training loss with respect to all model parameters are available. These gradients are used to compute the Gradient Norm Per Parameter (GNPP),

\begin{equation}
\label{gnpp}
\text{GNPP}_i =
\frac{\|\nabla_{\mathbf{W}_i}\mathcal{L}_t(A^0,W^0)\|_2}
{\prod_{j=1}^{N_i} d_j^i}.
\end{equation}

The model parameter tensor is selected according to

\begin{equation}
i^*=\arg\max_i \text{GNPP}_i,
\end{equation}

and the number of selected parameters is

\begin{equation}
\label{specific_selection}
\kappa q=\prod_{j=1}^{N_{i^*}} d_j^{\,i^*}.
\end{equation}

This criterion favors parameter tensors exhibiting large training-loss gradients while maintaining a relatively small tensor size.

\paragraph{Random parameter selection (R-LP-NAS).}

Alternatively, a subset of parameter tensors is selected uniformly at random from the set of valid candidate tensors satisfying a prescribed size constraint. Let

\begin{equation}
\label{valid_indices}
\mathcal{C}=
\left\{
i \,\middle|\,
\prod_{j=1}^{N_i} d_j^i
\le
\varkappa q
\right\},
\end{equation}

where $\varkappa q$ denotes the maximum allowable number of parameters for reduced Hessian computation.

A subset of $n$ tensors is sampled as

\begin{equation}
\label{index_set}
I=
\left\{
i_j
\mid
i_j\sim\mathrm{Uniform}(\mathcal{C}),
\;
j=1,\ldots,n
\right\},
\end{equation}

and the total number of selected parameters is

\begin{equation}
\label{random_selection}
\kappa q
=
\sum_{i\in I}
\prod_{j=1}^{N_i} d_j^i.
\end{equation}

Since different parameter tensors are selected across iterations, this strategy enables diverse Hessian approximations and provides update opportunities for different regions of the network.

\section{Time Complexity} \label{app7}
The computational cost of each outer iteration is summarized as follows:
\begin{enumerate}
    \item \textit{Hessian approximation (L-BFGS):} \( O(mn) \)
    \item \textit{Validation gradient and parameter update:} \( O(n) \)
    \item \textit{LP formulation and solution:} The LP involves \( n \) variables, \( \kappa q \) linear equality constraints, and \( p \) box constraints. While the worst-case complexity of generic LP solvers is \( O(n^3) \), the practical cost in our setting is significantly lower due to the small value of \( \kappa \) and the structured nature of the constraint matrix (primarily linear orthogonality constraints). Empirically, the solve time is well approximated by \( O(n^2) \).
    \item \textit{Lower-level weight optimization:} Processing \( E_D \) samples with batch size \( B \) requires \( O\!\left(\frac{E_D}{B} C_P \right) \).
\end{enumerate}
Aggregating these components over \( k^{\text{max}} \) outer iterations, the total time complexity is:
\begin{equation}
O\!\Bigg(
k^{\text{max}} \left[
mn + n + n^2 + \frac{E_D}{B} C_P
\right]
\Bigg).
\label{eq:complexity}
\end{equation}
Since typically \( m \ll n \) (e.g., \( m \approx 15 \) and \( n \approx 10^3 \) in our experiments), the linear term \( mn \) is dominated by the quadratic term \( n^2 \). Therefore, the overall complexity can be simplified as:
\begin{equation}
O\!\left(
k^{\text{max}} \left[
n^2 + \frac{E_D}{B} C_P
\right]
\right).
\end{equation}

\section{Optimal Normal and Reduction Cells} \label{app8}
The optimal normal and reduction cells discovered by the NAS methods for the CIFAR-10 and CIFAR-100 datasets are shown in Figures~\ref{cifar10_optimal_cells} and~\ref{cifar100_optimal_cells}, respectively.

\begin{figure}[htpb]
\centering

\rotatebox{90}{
\begin{minipage}{0.95\textheight}
    \centering
    \includegraphics[width=\linewidth]{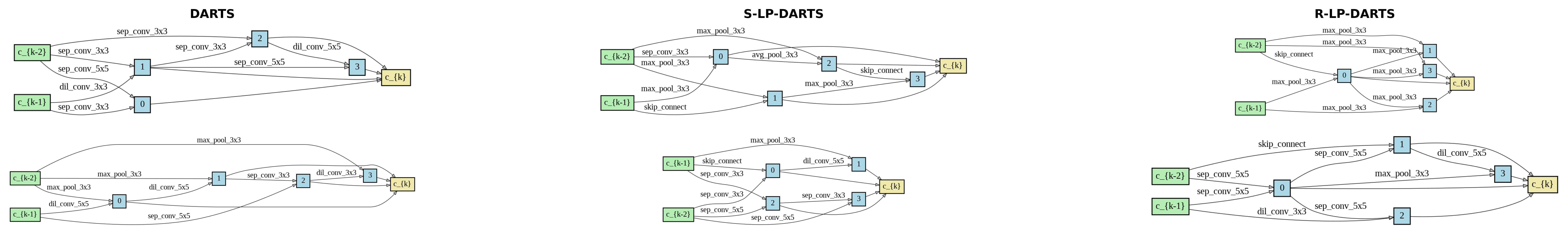}
    \captionof{figure}{Optimal normal and reduction cells searched on CIFAR-10 dataset.}
    \label{cifar10_optimal_cells}
\end{minipage}
}
\hspace{2.5cm}
\rotatebox{90}{
\begin{minipage}{0.95\textheight}
    \centering
    \includegraphics[width=\linewidth]{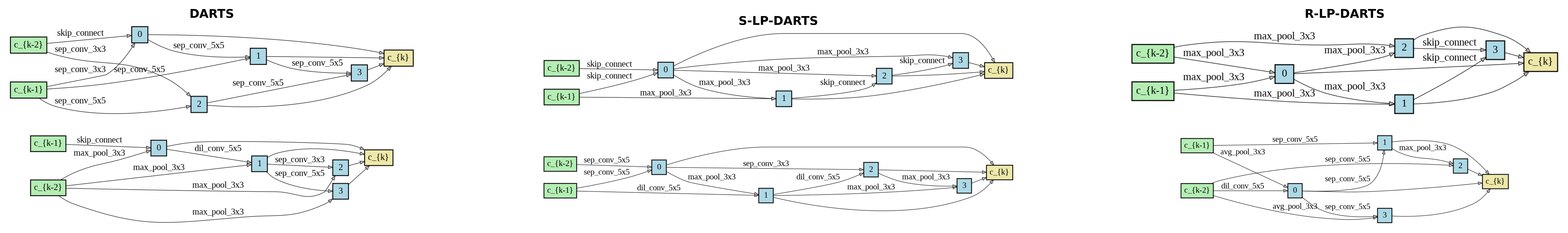}
    \captionof{figure}{Optimal normal and reduction cells searched on CIFAR-100 dataset.}
    \label{cifar100_optimal_cells}
\end{minipage}
}

\end{figure}

\section{Comparative Results} \label{app9}
NAS has evolved into one of the most dynamic subfields in deep learning, with a diverse set of methodologies demonstrating competitive results across benchmark datasets such as CIFAR-10, CIFAR-100, ImageNet, and more recently, large-scale transformer and graph domains. To provide a unified perspective on performance trends, this section consolidates results from key NAS algorithms, highlighting accuracy, parameter efficiency, and computational requirements. The data presented are drawn from original studies and comparative analysis in the NAS literature \cite{zoph2018learning, pham2018efficient, real2019regularized, DBLP:conf/iclr/LiuSY19, xu2019pcdarts, cai2018proxylessnas, wu2019fbnet, tan2019mnasnet, tan2019efficientnet, cai2019once, chen2021autoformer, gao2019graphnas, chen2019progressive, cai2024sto}. Table~\ref{tab:comparative_main} summarizes representative NAS methods spanning RL, EC, differentiable optimization, and meta-learning paradigms. Each approach is evaluated based on Top-1 accuracy, parameter count, and computational cost (architecture search time) measured in GPU days. The results clearly show the efficiency gains achieved by differentiable NAS and one-shot frameworks compared to earlier black-box methods.\\

\begin{table}[htb]
\centering
\caption{Comparative performance of representative NAS methods on CIFAR-10 and ImageNet benchmarks.}
\label{tab:comparative_main}
\resizebox{\textwidth}{!}{
\begin{tabular}{lcccccc}
\toprule
\textbf{Method} & \textbf{Search Type} & \textbf{Search Space} & \textbf{Dataset} & \textbf{Top-1 Acc. (\%)} & \textbf{Params (M)} & \textbf{GPU Days} \\
\midrule
NASNet-A \citep{zoph2018learning} & RL-based & Cell-based & CIFAR-10 (ImageNet) & 97.35 (74.0) & 3.3 (5.3) & 2000 \\
ENAS \citep{pham2018efficient} & RL (Weight Sharing) & Cell-based & CIFAR-10 & 97.11 & 4.6 & 0.5 \\
AmoebaNet-A \citep{real2019regularized} & Evolutionary & Cell-based & CIFAR-10 (ImageNet) & 96.66 (74.5) & 3.2 (5.1) & 3150 \\
DARTS \citep{DBLP:conf/iclr/LiuSY19} & Differentiable & Continuous & CIFAR-10 (ImageNet) & 97.24 (73.3) & 3.3 (4.7) & 4 \\
PC-DARTS \citep{xu2019pcdarts} & Differentiable & Continuous & CIFAR-10 (ImageNet) & 97.43 (74.9) & 3.6 (5.3) & 0.1 \\
ProxylessNAS \citep{cai2018proxylessnas} & Differentiable & Hardware-aware &  CIFAR-10 (ImageNet) & 97.92 (75.1) & 5.7 (7.1) & 4 \\
FBNet \citep{wu2019fbnet} & Differentiable & Hardware-aware & ImageNet & 74.9 & 5.5 & 9 \\
MnasNet \citep{tan2019mnasnet} & RL + Hardware & Mobile Search & ImageNet & 75.2 & 3.9 & 4.5 \\
EfficientNet-B0 \citep{tan2019efficientnet} & Compound Scaling & Platform-aware & ImageNet & 76.3 & 5.3 & - \\
Once-for-All \citep{cai2019once} & One-shot (Meta) & Elastic Supernet & ImageNet & 76.9 & 7.7 & 1.7 \\
Autoformer \citep{chen2021autoformer} & Differentiable & Transformer & ImageNet & 82.4 & 54.0 & - \\
GraphNAS \citep{gao2019graphnas} & RL-based & Graph Neural Nets & Cora (Citeseer) & 84.2 (73.1) & - & - \\
\bottomrule
\end{tabular}}
\end{table}

\end{document}